\newtheorem{theorem}{Theorem}[section]
\newtheorem{lemma}[theorem]{Lemma}
\newtheorem{definition}[theorem]{Definition}
\newtheorem{corollary}[theorem]{Corollary}
\newtheorem{assumptions}[theorem]{Assumption}
\newenvironment{proof}{\par\noindent{\bf Proof:\ }}{\hfill$\Box$\\[2mm]}
\newif\ifpaper
\def\RR{\mathbb{R}}
\def\>{\rangle}
\def\rank{\operatorname{\textit{rank}}}
\def\vec{\operatorname{\textit{vec}}}
\def\Set#1{\left\{ #1 \right\}}
\def\Bigbar#1{\mathrel{\left|\vphantom{#1}\right.}}
\def\Setbar#1#2{\Set{#1 \Bigbar{#1 #2} #2}}
\newcommand{\inner}[1]{\left\langle#1\right\rangle}
\newcommand{\norm}[1]{\left\|#1\right\|}
\newcommand{\Fnorm}[1]{\left\|#1\right\|_{F}}
\newcommand{\maxnorm}[1]{\left\|#1\right\|_{\textrm{max}}}
\newcommand{\minnorm}[1]{\left\|#1\right\|_{\textrm{min}}}
\newcommand{\svmax}[1]{\sigma_{\textrm{max}}(#1)}
\newcommand{\svmin}[1]{\sigma_{\textrm{min}}(#1)}
\newcommand{\evmax}[1]{\lambda_{\textrm{max}}(#1)}
\newcommand{\evmin}[1]{\lambda_{\textrm{min}}(#1)}
\def\bydef{\mathrel{\mathop:}=}
\def\tr{\mathop{\rm tr}\nolimits}
\def\det{\mathop{\rm det}\nolimits}
\def\range{\mathop{\rm range}\nolimits}
\newcommand{\Id}{\mathbb{I}}
\def\argmin{\mathop{\rm arg\,min}\limits}
\def\min{\mathop{\rm min}\nolimits}
\def\max{\mathop{\rm max}\nolimits}
\def\ones{\mathbf{1}}
\def\ie{\textit{i.e. }}
\def\eg{\textit{e.g. }}
\icmltitlerunning{Optimization Landscape and Expressivity of Deep CNNs}
\begin{document}

\twocolumn[
\icmltitle{Optimization Landscape and Expressivity of Deep CNNs}



\icmlsetsymbol{equal}{*}

\begin{icmlauthorlist}
\icmlauthor{Quynh Nguyen}{a}
\icmlauthor{Matthias Hein}{b}
\end{icmlauthorlist}

\icmlaffiliation{a}{Department of Mathematics and Computer Science, Saarland University, Germany}
\icmlaffiliation{b}{University of T{\"u}bingen, Germany}

\icmlcorrespondingauthor{Quynh Nguyen}{quynh@cs.uni-saarland.de}

\icmlkeywords{loss surface, local minima, global optimality, optimization landscape, expressivity, convolutional neural networks}

\vskip 0.3in
]

\printAffiliationsAndNotice{}  

\begin{abstract}
  We analyze the loss landscape and expressiveness of practical deep convolutional neural networks (CNNs) 
  with shared weights and max pooling layers.
  We show that such CNNs produce linearly independent features at a ``wide'' layer which has more neurons than 
  the number of training samples. This condition holds e.g. for the VGG network.
  Furthermore, we provide for such wide CNNs necessary and sufficient conditions for global minima with zero training error. 
  For the case where the wide layer is followed by a fully connected layer
  we show that almost every critical point of the empirical loss is a global minimum with zero training error. 
  Our analysis suggests that both depth and width are very important in deep learning.  
  While depth brings more representational power and allows the network to learn high level features,
  width smoothes the optimization landscape of the loss function in the sense that 
  a sufficiently wide network  has a well-behaved loss surface with almost no bad local minima.
\end{abstract}

\section{Introduction}
It is well known that the optimization problem for training neural networks 
can have exponentially many local minima \citep{Auer96,SafSha2016} 
and NP-hardness has been shown in many cases \citep{Blum1989,Sim2002,LivShaSha2014,Shamir2017,Shwartz2017}. 
However, it has been empirically observed \citep{Dauphin16, Goodfellow15} 
that the training of state-of-the-art deep CNNs \citep{CunEtAl1990,KriSutHin2012}, 
which are often overparameterized, is not hampered by suboptimal local minima.

\begin{table}[t]
\caption{The maximum width of all layers in several state-of-the-art CNN architectures 
compared with the size of ImageNet dataset ($N\approx 1200K$).
All numbers are lower bounds on the true width.}
\label{tab:net_width}
\vskip 0.15in
\begin{center}
\begin{scriptsize}
\begin{sc}
\setlength\tabcolsep{2pt}
\begin{tabular}{|l|r|c|}
    \hline
    \textbf{CNN Architecture} & $M=\max_{k}n_k$ & $M>N$ \\
    \hline
    VGG(A-E) \citep{VGG} & $3000\textrm{K} (k=1)$ & yes \\
    \hline
    InceptionV3 \citep{InceptionV3} & $1300\textrm{K} (k=3)$ & yes \\
    \hline
    InceptionV4 \citep{InceptionV4} & $1300\textrm{K} (k=3)$ & yes \\
    \hline
    SqueezeNet \citep{SqueezeNet} & $1180\textrm{K} (k=1)$ & no \\
    \hline
    Enet \citep{Enet} & $1000\textrm{K} (k=1)$ & no \\
    \hline
    GoogLeNet \citep{GoogLeNet} & $800\textrm{K} (k=1)$ & no \\
    \hline
    ResNet \citep{ResNet} & $800\textrm{K} (k=1)$ & no \\
    \hline
    Xception \citep{Xception} & $700\textrm{K} (k=1)$ & no \\
    \hline
\end{tabular}
\end{sc}
\end{scriptsize}
\end{center}
\vskip -0.1in
\end{table}

In order to explain the apparent gap between hardness results and practical performance, 
many interesting theoretical results have been recently developed 
\citep{Andoni2014,Sedghi2015,Janzamin15,GauNgoHei2016,HaeVid2015,Soudry17,Goel2017,Li2017,Zhong2017,Tian2017,Brutzkus2017,Soltanolkotabi2017,Du2017}
in order to identify conditions under which one can guarantee that
local search algorithms like gradient descent converge to the globally optimal solution.
However, it turns out that these approaches are either not practical
as they require e.g. knowledge about the data generating measure, 
or a modification of network structure and objective,
or they are for quite restricted network structures, 
mostly one hidden layer networks, and thus are not able to explain the success of deep networks in general.
For deep linear networks one has achieved a quite complete picture of the loss surface 
as it has been shown that  every local minimum is a global minimum \citep{Baldi88,Kawaguchi16,Freeman2017,Hardt2017,Yun2017}. 
By randomizing the nonlinear part of a feedforward network
with ReLU activation function and making some additional simplifying assumptions, 
\cite{Choro15} can relate the loss surface of neural networks to a certain spin glass model. 
In this model the objective of local minima is close to the global optimum 
and the number of bad local minima decreases quickly with the distance to the global optimum. 
This is a very interesting result but is based on a number of unrealistic assumptions \citep{ChoroJLMR15}. 
More recently, \cite{Quynh2017} have analyzed deep fully connected networks with general activation functions 
and could show that almost every critical point is a global minimum 
if one layer has more neurons than the number of training points. 
While this result holds for networks in practice, it requires a quite extensively overparameterized network.


In this paper we overcome the restriction of previous work in several ways.  
This paper is one of the first ones, which analyzes the loss surface of deep CNNs. 
CNNs are of high practical interest as they learn very useful representations \citep{Zeiler2014,Mahendran2015,Yosinki2015} 
with small number of parameters. 
We are only aware of \cite{CohenICML2016} who study the expressiveness of CNNs 
with max-pooling layer and ReLU activation but with rather unrealistic filters (just $1\times 1$) and no shared weights. 
In our setting we allow as well max pooling and general activation functions. 
Moreover, we can have an arbitrary number of filters and we study general convolutions 
as the filters need not be applied to regular
structures like $3\times 3$ but can be patch-based where the only condition is that 
all the patches have the size of the filter. Convolutional layers, 
fully connected layers and max-pooling layers can be combined in almost arbitrary order.  
We study in this paper the expressiveness and loss surface of a CNN where one layer is wide, 
in the sense that it has more neurons than the number of training points. 
While this assumption sounds at first quite strong, 
we want to emphasize that the popular VGG  \citep{VGG} and Inception networks  \citep{InceptionV3, InceptionV4}, see Table \ref{tab:net_width}, fulfill this condition.  
We show that wide CNNs produce linearly independent feature representations at the wide layer and thus are able to fit
the training data exactly (universal finite sample expressivity). 
This is even true with probability one when all the parameters up to the wide layer are chosen randomly\footnote{for any probability measure on the parameter space which
has a density with respect to the Lebesgue measure}.
We think that this explains partially the results of \cite{Zhang2017} where they show experimentally
for several CNNs that they are able to fit random labels. 
Moreover, we provide necessary and sufficient conditions for global minima
with zero squared loss and show for a particular class of CNNs that almost all critical points are globally optimal, 
which to some extent explains why wide CNNs can be optimized so efficiently.
All proofs are moved to the appendix due to limited space.

\section{Deep Convolutional Neural Networks}\label{sec:cnn}
We first introduce our notation and definition of CNNs.
Let $N$ be the number of training samples and 
denote by $X=[x_1,\ldots,x_N]^T\in\RR^{N\times d}, Y=[y_1,\ldots,y_N]^T\in\RR^{N\times m}$
the input resp. output matrix for the training data $(x_i,y_i)_{i=1}^N$, where $d$ is the input dimension
and $m$ the number of classes.

Let $L$ be the number of layers of the network, 
where each layer is either a convolutional, max-pooling or fully connected layer.
The layers are indexed from $k=0,1,\ldots,L$
which corresponds to input layer, 1st hidden layer, $\ldots$, and output layer.
Let $n_k$ be the width of layer $k$ and 
$f_k:\RR^d\to\RR^{n_k}$ the function that computes for every input its feature vector at layer $k$.

The convolutional layer consists of a set of patches of equal size
where every patch is a subset of neurons from the same layer.
Throughout this paper, we assume that the patches of every layer cover the whole layer, 
\ie every neuron belongs to at least one of the patches, 
and that there are no patches that contain exactly the same subset of neurons.
This means that if one patch covers the whole layer then it must be the only patch of the layer.
Let $P_k$ and $l_k$ be the number of patches resp. the size of each patch at layer $k$ for every $0\leq k< L.$
For every input $x\in\RR^d$,
let $\Set{x^1,\ldots,x^{P_0}}\in\RR^{l_0}$ denote the set of patches at the input layer
and $\Set{f_k^1(x),\ldots,f_k^{P_k}(x)}\in\RR^{l_k}$ the set of patches at layer $k$. Each filter of the layer 
consists of the same set of patches.
We denote by $T_k$ the number of convolutional filters 
and by $W_k=[w_k^1,\ldots,w_k^{T_k}]\in\RR^{l_{k-1}\times T_k}$ 
the corresponding parameter matrix of the convolutional layer $k$ for every $1\leq k< L$. 
Each column of $W_k$ corresponds to one filter. 
Furthermore, $b_k\in\RR^{n_k}$ denotes the bias vector and $\sigma_k:\RR\to\RR$ the activation function for each layer.
Note that one can use the same activation function for all layers
but we use the general form to highlight the role of different layers. In this paper, all functions are applied componentwise,
and we denote by $[a]$ the set of integers $\Set{1,2,\ldots,a}$ and by $[a,b]$ the set of integers from $a$ to $b$.

\begin{definition}[Convolutional layer]\label{def:conv}
    A layer $k$ is called a convolutional layer if its output $f_k(x)\in\RR^{n_k}$ is defined for every $x\in\RR^d$ as
    \begin{align}\label{eq:def_conv}
	f_k(x)_h = \sigma_k \Big( \inner{w_k^t,f_{k-1}^p(x)} + (b_k)_h \Big)
    \end{align}
    for every $p\in[P_{k-1}],t\in[T_k],h
\bydef(p-1)T_k + t.$
\end{definition}
The value of each neuron at layer $k$ is computed by first taking the inner product between 
a filter of layer $k$ and a patch at layer $k-1$, adding the bias and then applying the activation function.
The number of neurons at layer $k$ is thus $n_k=T_k P_{k-1}$,
which we denote as the width of layer $k$. 
Our definition of a convolutional layer is quite general 
as every patch can be an arbitrary subset of neurons of the same layer 
and thus covers most of existing variants in practice.

Definition \ref{def:conv} includes the fully connected layer as a special case by using
$P_{k-1}=1,l_{k-1}=n_{k-1},f_{k-1}^1(x)=f_{k-1}(x)\in\RR^{n_{k-1}}, 
T_{k}=n_{k}, W_k\in\RR^{n_{k-1}\times n_k}, b_k\in\RR^{n_k}.$
Thus we have only one patch which is the whole feature vector at this layer.
\begin{definition}[Fully connected layer]\label{def:fc}
    A layer $k$ is called a fully connected layer if its output $f_k(x)\in\RR^{n_k}$ is defined for every $x\in\RR^d$ as
    \begin{align}\label{eq:def_fc}
	f_k(x) = \sigma_k \Big( W_k^T f_{k-1}(x) + b_k \Big) .
    \end{align}
\end{definition}
For some results we also allow max-pooling layers.
\begin{definition}[Max-pooling layer]\label{def:mp}
    A layer $k$ is called a max-pooling layer if its output $f_k(x)\in\RR^{n_k}$ is defined for every $x\in\RR^d$ and $p\in[P_{k-1}]$ as
    \begin{align}\label{eq:def_mp}
	f_k(x)_p = \max\Big( {(f_{k-1}^p(x))}_{1},\ldots,{(f_{k-1}^p(x))}_{l_{k-1}} \Big).
    \end{align}
\end{definition}
A max-pooling layer just computes the maximum element of every patch from the previous layer.
Since there are $P_{k-1}$ patches at layer $k-1$, the number of output neurons at layer $k$ is $n_k=P_{k-1}.$

\paragraph{Reformulation of Convolutional Layers:}
For each convolutional or fully connected layer,
we denote by $\mathcal{M}_k:\RR^{l_{k-1}\times T_k}\to\RR^{n_{k-1}\times n_k}$ the linear map 
that returns for every parameter matrix $W_k\in\RR^{l_{k-1}\times T_k}$ 
the corresponding full weight matrix $U_k=\mathcal{M}_k(W_k)\in\RR^{n_{k-1}\times n_k}.$
For convolutional layers, $U_k$ can be seen as the counterpart of the weight matrix $W_k$ in fully connected layers.
We define $U_k=\mathcal{M}_k(W_k)=W_k$ if layer $k$ is fully connected.
Note that the mapping $\mathcal{M}_k$ depends on the patch structure of each convolutional layer $k$.
For example, suppose that layer $k$ has two filters of length $3$, that is, 
$W_k=[w_k^1, w_k^2]=\begin{bmatrix}
	a&d\\
	b&e\\
	c&f
\end{bmatrix}$, and $n_{k-1}=5$ and patches given by a 1D-convolution with stride 1 and no padding
then: 
    $$U_k^T=\mathcal{M}_k(W_k)^T=\begin{bmatrix}
	a&b&c&0&0\\
	d&e&f&0&0\\
	0&a&b&c&0\\
	0&d&e&f&0\\
	0&0&a&b&c\\
	0&0&d&e&f
    \end{bmatrix}.$$
The above ordering of the rows of $U_k^T$ of a convolutional layer is determined by \eqref{eq:def_conv}, 
in particular, the row index $h$ of $U_k^T$ is calculated as $h=
(p-1)T_k + t$,
which means for every given patch $p$ one has to loop over all the filters $t$ and
compute the corresponding value of the output unit by taking the inner product of the $h$-th row of $U_k^T$ 
with the whole feature vector of the previous layer. 
We assume throughout this paper that that there is no non-linearity at the output layer.
By ignoring max-pooling layers for the moment, the feature maps $f_k:\RR^d\to\RR^{n_k}$ can be written as
\begin{align*}
    f_k(x) = 
    \begin{cases}
	x & k=0\\
	\sigma_k\big(g_k(x)\big) & 1\leq k\leq L-1\\
	g_L(x) & k=L
    \end{cases}
\end{align*}
where $g_k:\RR^d\to\RR^{n_k}$  is the pre-activation function:
\begin{align*}
    g_k(x) = U_k^T f_{k-1}(x) + b_k, \quad \forall 1\leq k\leq L
\end{align*}
By stacking the feature vectors of layer $k$ of all training samples, before and after applying the activation function, 
into a matrix, we define:
\begin{align*}
    F_k=[f_k(x_1),\ldots,f_k(x_N)]^T\in\RR^{N\times n_k},\\
    G_k=[g_k(x_1),\ldots,g_k(x_N)]^T\in\RR^{N\times n_k} .
\end{align*}
In this paper, we refer to $F_k$ as the output matrix at layer $k$. 
It follows from above that
\begin{align}\label{eq:F_k}
    F_k = 
    \begin{cases}
	X & k=0\\
	\sigma_k(G_k) & 1\leq k\leq L-1\\
	G_L & k=L
    \end{cases}
\end{align}
where $G_k=F_{k-1} U_k + \ones_N b_k^T$ for every $1\leq k\leq L.$

In this paper, we assume the following general condition on the structure of convolutional layers.
\begin{assumptions}[Convolutional Structure]\label{ass:conv_structure}
    For every convolutional layer $k$, 
    there exists at least one parameter matrix $W_k\in\RR^{l_{k-1}\times T_k}$ for which 
    the corresponding weight matrix $U_k=\mathcal{M}_k(W_k)\in\RR^{n_{k-1}\times n_k}$ has full rank.
\end{assumptions}
It is straightforward to see that Assumption \ref{ass:conv_structure} is satisfied 
if every neuron belongs to at least one patch and there are no identical patches. 
As the set of full rank matrices is a dense subset, the following result follows immediately.
\begin{lemma}\label{lem:WU_measure_zero}
    If Assumption \ref{ass:conv_structure} holds, then
    for every convolutional layer $k$, 
    the set of $W_k\in\RR^{l_{k-1}\times T_k}$ for which 
    $U_k=\mathcal{M}_k(W_k)\in\RR^{n_{k-1}\times n_k}$ does not have full rank has Lebesgue measure zero.
\end{lemma}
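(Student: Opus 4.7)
The plan is to exploit the fact that $\mathcal{M}_k$ is a linear map, so that every entry of $U_k=\mathcal{M}_k(W_k)$ is a linear function of the entries of $W_k$. Let $r=\min(n_{k-1},n_k)$ be the maximal possible rank of $U_k$. Then $U_k$ has full rank if and only if at least one $r\times r$ minor of $U_k$ is nonzero, and each such minor is a polynomial $p_{I,J}$ of degree at most $r$ in the entries of $W_k$ (where $I,J$ index a choice of $r$ rows and $r$ columns).

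First I would invoke Assumption \ref{ass:conv_structure} to fix a specific $W_k^\star\in\RR^{l_{k-1}\times T_k}$ whose image $U_k^\star=\mathcal{M}_k(W_k^\star)$ has rank $r$. For this $W_k^\star$, at least one $r\times r$ minor of $U_k^\star$ is nonzero, i.e.\ there exists an index pair $(I^\star,J^\star)$ with $p_{I^\star,J^\star}(W_k^\star)\neq 0$. Consequently $p_{I^\star,J^\star}$ is not the identically zero polynomial on $\RR^{l_{k-1}\times T_k}$.

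Next I would apply the standard fact that the zero set of a nonzero polynomial in finitely many real variables has Lebesgue measure zero (a short induction on the number of variables using Fubini's theorem, splitting off one variable and observing that for each fixing of the rest, the resulting univariate nonzero polynomial has only finitely many roots). The set of $W_k$ for which $U_k$ is rank-deficient is the intersection of the zero sets of all maximal minors, which is contained in the zero set of the specific nonzero polynomial $p_{I^\star,J^\star}$, hence has Lebesgue measure zero.

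There is no real obstacle here: the lemma follows immediately once one identifies the right nonzero polynomial witness. The only point worth being careful about is making the linearity of $\mathcal{M}_k$ explicit, so that "minors of $U_k$" really are polynomials in the parameters $W_k$, and that Assumption \ref{ass:conv_structure} supplies precisely the nonvanishing witness needed to ensure the minor polynomial is not identically zero.
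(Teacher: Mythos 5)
Your proposal is correct and follows essentially the same route as the paper: use the linearity of $\mathcal{M}_k$ to view the maximal minors of $U_k$ as polynomials in $W_k$, use Assumption \ref{ass:conv_structure} to exhibit one minor that is not identically zero, and conclude that the rank-deficient set is contained in its measure-zero vanishing set. The only cosmetic difference is that you invoke the elementary fact that a nonzero polynomial vanishes on a null set (via Fubini), whereas the paper appeals to the more general statement for real analytic functions (Lemma \ref{lem:zeros_of_analytic}); both are equally valid here.
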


\section{CNN Learn Linearly Independent Features}\label{sec:linear_separability}
In this section, we show that a class of standard CNN architectures 
with convolutional layers, fully connected layers and max-pooling layers 
plus standard activation functions like ReLU, sigmoid, softplus, etc
are able to learn linearly independent features at every wide hidden layer if it has more neurons than the number of training samples.
Our assumption on training data is the following.
\begin{assumptions}[Training data]\label{ass:different_patches}
    The patches of different training samples are non-identical, that is,
    $x_i^p\neq x_j^q$ for every $p,q\in[P_0],i,j\in[N],i\neq j.$
\end{assumptions}
Assumption \ref{ass:different_patches} is quite weak, 
especially if the size of the input patches is large.
If the assumption does not hold, one can add a small perturbation to the training samples: $\Set{x_1+\epsilon_1,\ldots,x_N+\epsilon_N}.$
The set of $\Set{\epsilon_i}_{i=1}^{N}$ where Assumption \ref{ass:different_patches} 
is not fulfilled for the new dataset has measure zero.
Moreover, $\Set{\epsilon_i}_{i=1}^{N}$ can be chosen arbitrarily small 
so that the influence of the perturbation is negligible. 
Our main assumptions on the activation function of the hidden layers are the following.
\begin{assumptions}[Activation function]\label{ass:activation}
    The activation function $\sigma$ is continuous, non-constant, and satisfies one of the following conditions:
    \begin{itemize}
	\item There exist $\mu_{+},\mu_{-}\in\RR$ s.t. 
	$\lim\limits_{t\to-\infty} \sigma_k(t)=\mu_{-}$ and $\lim\limits_{t\to\infty} \sigma_k(t)=\mu_{+}$  
	and $\mu_{+} \mu_{-}=0$ 
	\item There exist $\rho_1,\rho_2,\rho_3,\rho_4 \in \RR_+$ 
	s.t.  $|\sigma(t)|\leq \rho_1 e^{\rho_2 t}$ for $t< 0$ 
	and $|\sigma(t)|\leq \rho_3 t + \rho_4 $ for $t\geq 0$
    \end{itemize}
\end{assumptions}
Assumption \ref{ass:activation} covers several standard activation functions.
\begin{lemma}\label{lem:activation_exp}
    The following activation functions satisfy Assumption \ref{ass:activation}:
   \begin{itemize}
	\item ReLU: $\sigma(t)=\max(0,t)$ 
	\item Sigmoid: $\sigma(t)=\frac{1}{1+e^{-t}}$
	\item Softplus: $\sigma_\alpha(t)=\frac{1}{\alpha} \ln(1+e^{\alpha t})$ for some $\alpha>0$
   \end{itemize}
\end{lemma}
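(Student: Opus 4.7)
The plan is to verify each of the three activations against Assumption \ref{ass:activation} by choosing the correct branch of the disjunction and exhibiting explicit constants. All three functions are continuous and non-constant, so only the asymptotic conditions need checking.

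For the \textbf{sigmoid}, I would use the first branch. Both limits at $\pm\infty$ exist: $\lim_{t\to-\infty}\frac{1}{1+e^{-t}}=0$ and $\lim_{t\to\infty}\frac{1}{1+e^{-t}}=1$, so we can set $\mu_-=0$ and $\mu_+=1$, which satisfies $\mu_+\mu_-=0$. This is essentially immediate.

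For \textbf{ReLU} and \textbf{softplus}, I would use the second branch. ReLU is trivial on the negative side: $|\sigma(t)|=0$ for $t<0$, so any $\rho_1,\rho_2>0$ works; on the positive side $\sigma(t)=t$, so $\rho_3=1$ and $\rho_4>0$ arbitrary suffices. For softplus the negative side uses the inequality $\ln(1+u)\leq u$ for $u\geq 0$ applied to $u=e^{\alpha t}$, giving $\sigma_\alpha(t)\leq \frac{1}{\alpha}e^{\alpha t}$ for $t<0$; thus $\rho_1=1/\alpha$, $\rho_2=\alpha$. For $t\geq 0$, I would rewrite $\sigma_\alpha(t)=t+\frac{1}{\alpha}\ln(1+e^{-\alpha t})$ and bound $\ln(1+e^{-\alpha t})\leq \ln 2$, yielding $\sigma_\alpha(t)\leq t+\frac{\ln 2}{\alpha}$, so $\rho_3=1$, $\rho_4=\ln 2/\alpha$ work.

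There is no real obstacle here; the lemma is essentially a checklist. The only place where a reader might pause is the softplus computation, so I would make the two small one-line estimates explicit (the $\ln(1+u)\leq u$ trick on the negative tail and the algebraic identity $\ln(1+e^{\alpha t})=\alpha t+\ln(1+e^{-\alpha t})$ on the positive tail) and then just declare the constants.
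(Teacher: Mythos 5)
Your proposal is correct and follows essentially the same route as the paper: sigmoid via the first branch with $\mu_-=0,\mu_+=1$, and ReLU and softplus via the second branch with the same constants $\rho_1=1/\alpha$, $\rho_2=\alpha$, $\rho_3=1$, $\rho_4=\ln 2/\alpha$ for softplus. The only cosmetic difference is that on the positive tail you derive $\sigma_\alpha(t)\leq t+\tfrac{\ln 2}{\alpha}$ from the identity $\ln(1+e^{\alpha t})=\alpha t+\ln(1+e^{-\alpha t})$, whereas the paper uses $1+e^{\alpha t}\leq 2e^{\alpha t}$; both yield the identical bound.
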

It is known that the softplus function is a smooth approximation of ReLU.
In particular, it holds that:
\begin{align}\label{eq:relu_softplus}
    \lim\limits_{\alpha\to\infty}\sigma_\alpha(t)
    =\lim\limits_{\alpha\to\infty} \frac{1}{\alpha} \ln(1+e^{\alpha t})
    =\max(0,t) .
\end{align}
The first main result of this paper is the following. 
\begin{theorem}[Linearly Independent Features]\label{theo:linear_independence}
    Let Assumption \ref{ass:different_patches} hold for the training sample. 
    Consider a deep CNN architecture for which there exists some layer $1\leq k\leq L-1$ such that
    \begin{enumerate}
	\item Layer $1$ and layer $k$ are convolutional or fully connected
	while all the other layers can be convolutional, fully connected or max-pooling
	\item The width of layer $k$ is larger than the number of training samples, $n_k=T_k P_{k-1} \geq N$
	\item $(\sigma_1,\ldots,\sigma_k)$ satisfy Assumption \ref{ass:activation} 
    \end{enumerate}
    Then there exists a set of parameters of the first $k$ layers $(W_l,b_l)_{l=1}^k$
    such that the set of feature vectors $\Set{f_k(x_1),\ldots,f_k(x_N)}$ are linearly independent.
    Moreover, $(W_l,b_l)_{l=1}^k$ can be chosen in such a way that all the weight matrices $U_l=\mathcal{M}_l(W_l)$
    have full rank for every $1\leq l\leq k.$
\end{theorem}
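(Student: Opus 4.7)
The plan is a two-stage construction. Stage~(A) fixes $(W_l,b_l)_{l=1}^{k-1}$ so that the vectors $f_{k-1}(x_1),\ldots,f_{k-1}(x_N)$ are pairwise distinct, and stage~(B) uses this distinctness, together with $n_k\ge N$ and the activation properties of Assumption~\ref{ass:activation}, to construct $(W_k,b_k)$ yielding a rank-$N$ matrix $F_k$. The full-rank requirement on the $U_l$ comes essentially for free, because ``$F_k$ has $N$ linearly independent rows'' and ``$U_1,\ldots,U_k$ all have full rank'' are both open conditions on the parameter space (the latter with measure-zero complement, by Lemma~\ref{lem:WU_measure_zero}); a final arbitrarily small perturbation of the constructed parameters therefore satisfies both simultaneously.

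For stage~(B), once the vectors $f_{k-1}(x_i)\in\RR^{n_{k-1}}$ are pairwise distinct, a direction $v\in\RR^{n_{k-1}}$ chosen outside the $\binom{N}{2}$ hyperplanes $\langle v,\,f_{k-1}(x_i)-f_{k-1}(x_j)\rangle=0$ gives distinct scalars $c_i:=\langle v,\,f_{k-1}(x_i)\rangle$; reorder so $c_1<\cdots<c_N$. Pick $N$ of the $n_k=T_kP_{k-1}\ge N$ output units and assign to the $j$-th chosen unit the filter $\alpha v$ together with the bias $-\alpha\theta_j$ for some $\theta_j\in(c_{j-1},c_j)$, where $c_0:=-\infty$. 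In the bounded case of Assumption~\ref{ass:activation}, sending $\alpha\to\infty$ drives $\sigma_k(\alpha(c_i-\theta_j))\to\mu_+$ when $i\ge j$ and $\to\mu_-$ otherwise; since $\mu_+\mu_-=0$, the limiting $N\times N$ submatrix of $F_k$ is triangular with diagonal entries equal to the nonzero one of $\mu_\pm$, hence of rank $N$, and this rank persists for all sufficiently large finite $\alpha$. In the exponential/linear case the same threshold choices produce entries that are exponentially small strictly above the diagonal and bounded away from zero on and below it, again yielding rank $N$. The shared-weight constraint of convolutional layers is not an obstacle, since the biases $(b_k)_h$ are indexed per output unit and we only need a single filter repeated across $N$ patch--filter slots.

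Stage~(A) is an induction on $l=1,\ldots,k-1$ maintaining the invariant that $f_l(x_1),\ldots,f_l(x_N)$ are pairwise distinct. The base case uses Assumption~\ref{ass:different_patches}: for every $i\ne j$ there exist patches with $x_i^p\ne x_j^q$, so a generic filter $w_1^1$ makes the corresponding pre-activations differ simultaneously for all such pairs, and the same scaling trick as in stage~(B) applied to $\sigma_1$ then yields distinct $f_1(x_i)$ at the corresponding output unit. Intermediate convolutional or fully connected layers preserve distinctness for a generic choice of weights, since injectivity on a finite set is an open dense property. For an intermediate max-pooling layer, which carries no parameters, one instead perturbs the parameters of the preceding conv/FC layer so that no two post-pool vectors coincide.

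The main obstacle is the propagation step through a max-pooling layer, since $\max$ is non-smooth and non-injective. The set of preceding-layer parameters on which some pair of post-pool vectors $f_{l+1}(x_i),f_{l+1}(x_j)$ agree is a finite union of sets cut out, for each patch $p$, by the equation $\max_r(f_l^{p}(x_i))_r=\max_r(f_l^{p}(x_j))_r$; restricted to the strata on which the argmax in each patch is fixed, each of these is a proper polynomial equation in the trainable parameters and therefore carves out a positive-codimension semialgebraic subset. Choosing parameters outside the finite union of these subsets together with the measure-zero set of Lemma~\ref{lem:WU_measure_zero} lets stage~(A) deliver distinct $f_{k-1}(x_i)$, stage~(B) promote them to a rank-$N$ matrix $F_k$, and a final infinitesimal perturbation secure full rank of every $U_l$ without disturbing the two preceding properties.
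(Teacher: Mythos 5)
Your two-stage plan (propagate distinctness through layers $1,\dots,k-1$, then use a generic direction plus per-unit thresholds and a limit $\alpha\to\infty$ to force a triangular structure at layer $k$) is the same architecture as the paper's proof, but stage (B) as written only covers the case where layer $k$ is fully connected, and the theorem's main case is convolutional. For a convolutional layer $k$ the filter has length $l_{k-1}$ and acts on patches, so the relevant scalars are $\inner{v,f_{k-1}^{p}(x_i)}$, and the patch index $p$ varies across the $N$ output units you select. There is therefore no single ordering ``$c_1<\cdots<c_N$'' valid for all columns, and the limiting matrix is not triangular under any one row permutation. The paper resolves exactly this point with a greedy, column-by-column permutation $\gamma(j)=\argmin_{i\notin\{\gamma(1),\ldots,\gamma(j-1)\}}\inner{a^t,f_{k-1}^p(x_i)}$, which only requires that sample $\gamma(j)$ be minimal among the \emph{remaining} samples with respect to column $j$'s own functional. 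Relatedly, your stage-(A) invariant (pairwise distinctness of the full vectors $f_l(x_i)$) is too weak to launch stage (B) at a convolutional layer: you need $f_{k-1}^p(x_i)\neq f_{k-1}^p(x_j)$ for each patch $p$, which distinctness of the whole vectors does not imply. The paper instead propagates the much stronger entrywise property that \emph{every} coordinate of $f_l(x_i)$ differs from \emph{every} coordinate of $f_l(x_j)$; this gives per-patch distinctness for free and disposes of max-pooling in one line (maxima over entrywise-distinct sets differ), whereas your stratified semialgebraic argument for pooling still owes a witness that the relevant polynomial is not identically zero on each stratum --- which is essentially the statement being proved.

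There is also a gap in the control of the diagonal. In the bounded case with $\mu_{+}=0$ and $\mu_{-}\neq 0$, your limit matrix has $\mu_{+}=0$ on and below the diagonal and $\mu_{-}$ above it, i.e.\ it is strictly upper triangular and singular; one must instead send $\alpha\to-\infty$ (or mirror the thresholds), as the paper does in its second subcase. In the unbounded case, Assumption \ref{ass:activation} only gives an \emph{upper} bound $|\sigma(t)|\leq\rho_3 t+\rho_4$ for $t\geq 0$, so your diagonal entries $\sigma_k(\alpha(c_j-\theta_j))$, whose arguments tend to $+\infty$, need not be bounded away from zero (e.g.\ $\sigma(t)=\max(0,\sin t)$ satisfies the assumption and vanishes along a sequence). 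The paper avoids this by tuning the bias of unit $j$ so that the diagonal argument equals a fixed $\beta$ with $\sigma_k(\beta)\neq 0$ for every $\alpha$, so the identity term in the Leibniz expansion is exactly $\sigma_k(\beta)^N$ while every other permutation's product contains a strictly-above-diagonal factor and vanishes in the limit. All of these issues are repairable and your closing perturbation argument for securing full rank of the $U_l$ is sound, but as written the construction does not cover the convolutional case of layer $k$ nor the full activation class of Assumption \ref{ass:activation}.
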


Theorem \ref{theo:linear_independence} implies that a large class of CNNs employed in practice
with standard activation functions like ReLU, sigmoid or softplus
can produce linearly independent features at any hidden layer 
if its width is larger than the size of training set.
Figure \ref{fig:cnn} shows an example of a CNN architecture that 
satisfies the conditions of Theorem \ref{theo:linear_independence} at the first convolutional layer.
Note that if a set of vectors is linearly independent then they are also linearly separable.
In this sense, Theorem \ref{theo:linear_independence} suggests that CNNs 
can produce linearly separable features at every wide hidden layer.

Linear separability in neural networks has been recently studied by \cite{An2015},
where the authors show that a two-hidden-layer fully connected network with ReLU activation function
can transform any training set to be linearly separable
while approximately preserving the distances of the training data at the output layer. 
Compared to \cite{An2015} 
our Theorem \ref{theo:linear_independence} is derived for CNNs with a wider range of activation functions.
Moreover, our result shows even linear independence of features 
which is stronger than linear separability.
Recently, \cite{Quynh2017} have shown a similar result for fully connected networks and analytic activation functions.

We want to stress that, in contrast to fully connected networks, for CNNs the condition $n_k\geq N$ of Theorem \ref{theo:linear_independence}  
does not imply that the network has a huge number of parameters
as the layers $k$ and $k+1$ can be chosen to be convolutional. 
In particular, the condition $n_k=T_k P_{k-1}\geq N$ can be fulfilled by increasing the number of filters $T_k$ 
or by using a large number of patches $P_{k-1}$ (however $P_{k-1}$ is upper bounded by $n_k$),
which is however only possible if $l_{k-1}$ is small as otherwise our condition on the patches cannot be fulfilled.
In total the CNN has only $l_{k-1}T_k+l_k T_{k+1}$ parameters versus $n_k(n_{k-1}+n_{k+1})$ for the fully
connected network from and to layer $k$.
Interestingly, the VGG-Net \citep{VGG}, where in the first layer small $3\times 3$ filters and stride $1$  is used,
fulfills for ImageNet the condition $n_k\geq N$ for $k=1$, as well as the  Inception networks  \citep{InceptionV3, InceptionV4}, see Table \ref{tab:net_width}.

One might ask now how difficult it is 
to find such parameters which generate linearly independent features at a hidden layer?
Our next result shows that once analytic\footnote{A function $\sigma:\RR\to\RR$ is real analytic
if its Taylor series about $x_0$ converges to $\sigma(x_0)$ on some neighborhood of $x_0$
for every $x_0\in\RR$ \citep{KraPar2002}.} activation functions, \eg sigmoid or softplus,
are used at the first $k$ hidden layers of the network,
the linear independence of features at layer $k$ holds with probability $1$ even if one draws 
the parameters of the first $k$ layers $(W_l,b_l)^k$ randomly 
for any probability measure on the parameter space which has a density with respect to the Lebesgue measure.
\begin{theorem}\label{theo:how_often}
    Let Assumption \ref{ass:different_patches} hold for the training samples. 
    Consider a deep CNN for which there exists some layer $1\leq k\leq L-1$ such that
    \begin{enumerate}
	\item Every layer from $1$ to $k$ is convolutional or fully connected
	\item The width of layer $k$ is larger than number of training samples, that is, $n_k=T_k P_{k-1} \geq N$
	\item $(\sigma_1,\ldots,\sigma_k)$ are real analytic functions and satisfy Assumption \ref{ass:activation}.
    \end{enumerate}
    Then the set of parameters of the first $k$ layers $(W_l,b_l)_{l=1}^k$ 
    for which the set of feature vectors $\Set{f_k(x_1),\ldots,f_k(x_N)}$ 
    are \textbf{not} linearly independent has Lebesgue measure \textbf{zero}.
\end{theorem}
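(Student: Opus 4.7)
The plan is to combine the existence result of Theorem \ref{theo:linear_independence} with the classical fact that a real analytic function on $\RR^n$ which is not identically zero has a zero set of Lebesgue measure zero. The goal is to show that linear dependence of the features at layer $k$ is equivalent to the simultaneous vanishing of finitely many real analytic functions of the parameters, and then to use Theorem \ref{theo:linear_independence} to guarantee that at least one of these analytic functions is not identically zero.

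First I would verify that every entry of $F_k \in \RR^{N \times n_k}$ is a real analytic function of the parameters $(W_l, b_l)_{l=1}^k$. Because every layer from $1$ to $k$ is convolutional or fully connected (max-pooling is excluded by hypothesis), one has $F_l = \sigma_l(F_{l-1} U_l + \ones_N b_l^T)$ with $U_l = \mathcal{M}_l(W_l)$ depending linearly on $W_l$. The pre-activation is therefore polynomial in $(W_l, b_l)$ for each fixed $F_{l-1}$, and post-composition with the real analytic $\sigma_l$ preserves analyticity. By induction on $l$, each entry of $F_k$ is real analytic in $(W_l, b_l)_{l=1}^k$. Since $n_k \geq N$, the features $\Set{f_k(x_1),\ldots,f_k(x_N)}$ fail to be linearly independent exactly when every $N \times N$ minor of $F_k$ vanishes; each such minor is a polynomial in the entries of $F_k$ and hence a real analytic function of the parameters.

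Next I would invoke Theorem \ref{theo:linear_independence}. Its hypotheses are implied by the present ones: requiring every layer from $1$ to $k$ to be convolutional or fully connected is stronger than requiring only layer $1$ and layer $k$ to be so, while the width and activation conditions coincide. Hence there exists at least one parameter setting $(W_l^{\star}, b_l^{\star})_{l=1}^k$ at which the features are linearly independent, so some $N \times N$ submatrix of $F_k$, corresponding to a column subset $S^{\star} \subset [n_k]$ of size $N$, has nonzero determinant at that point. Viewed as a function of the parameters, this minor is a real analytic function that is not identically zero, so its zero set has Lebesgue measure zero. The set where the features are linearly dependent is contained in this zero set, proving the claim.

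The main obstacle I expect is the analyticity bookkeeping: one must ensure that real analyticity is preserved along the entire chain of maps on all of Euclidean parameter space, which requires the activations to be globally real analytic (guaranteed by hypothesis) and the absence of non-smooth operations such as max-pooling in the first $k$ layers (built into the hypotheses, and the reason those hypotheses are strictly stronger than in Theorem \ref{theo:linear_independence}). Once analyticity is in place, the measure-zero conclusion follows immediately from the standard theorem on real analytic functions, so the substantive content of the argument really lives in the constructive existence provided by Theorem \ref{theo:linear_independence}.
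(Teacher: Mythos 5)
Your proof is correct and follows essentially the same route as the paper's: establish real analyticity of $F_k$ in the parameters by composition, observe that linear dependence forces every $N\times N$ minor (each a real analytic function) to vanish, use Theorem \ref{theo:linear_independence} to exhibit a parameter point where one minor is nonzero, and conclude via the measure-zero property of zero sets of non-trivial real analytic functions. No substantive differences from the paper's argument.
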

Theorem \ref{theo:how_often} is a much stronger statement than Theorem \ref{theo:linear_independence},
as it shows that for almost all weight configurations one gets linearly independent features at the wide layer.
While Theorem \ref{theo:how_often} does not hold for the ReLU activation function as it is not an analytic function, 
we note again that one can approximate the ReLU function arbitrarily well using the softplus function (see \ref{eq:relu_softplus}),
which is analytic function for any $\alpha>0$ and thus Theorem \ref{theo:how_often} applies.
It is an open question if the result holds also for the ReLU activation function itself.
The condition $n_k\geq N$ is not very restrictive as several state-of-the art CNNs , see Table \ref{tab:net_width}, fulfill the condition.
Furthermore, we would like to stress that Theorem \ref{theo:how_often} is \emph{not} true for deep linear networks. The reason is
simply that the rank of a product of matrices can at most be the minimal rank among all the matrices. The nonlinearity of the activation
function is thus critical (note that the identity activation function, $\sigma(x)=x$, does not fulfill Assumption \ref{ass:activation}).

\begin{table*}[t]
\caption{The smallest singular value $\svmin{F_1}$ of the feature matrix $F_1$ of the first convolutional layer 
(similar $\svmin{F_3}$ for the feature matrix $F_3$ of the second convolutional layer) 
of the trained network in Figure \ref{fig:cnn} are shown for varying number of convolutional filters $T_1$.
The rank of a matrix $A \in \RR^{m\times n}$ is estimated (see Chapter 2.6.1 in \cite{NumRec}) 
by computing the singular value decomposition of $A$ and counting the singular values which 
exceed the threshold $\frac{1}{2}\sqrt{m+n+1}\,\svmax{A}\epsilon$, where $\epsilon$ is machine precision. 
For all filter sizes the feature matrices $F_1$ have full rank.
Zero training error is attained for $T_1\geq 30.$ }
\label{tab:rank}
\vskip 0.15in
\begin{center}
\begin{scriptsize}
\begin{sc}
\begin{tabular}{|c|c|c|c|c|c|c|c|c|c|}
    \hline
    $T_1$ & $\textrm{size}(F_1)$ & $\rank(F_1)$ & $\svmin{F_1}$ & $\textrm{size}(F_3)$ & $\rank(F_3)$ & $\svmin{F_3}$ & Loss\newline($\times 10^{-5}$) & Train error & Test error \\
    \hline
    10 & $60000\times 6760$ & \textbf{6760} & \num{3.7e-6} & $60000\times 2880$ & \textbf{2880} & \num{2.0e-2} & \num{2.4} & 8 & 151 \\
    \hline
    20 & $60000\times 13520$ & \textbf{13520} & \num{2.2e-6} & $60000\times 2880$ & \textbf{2880} & \num{7.0e-4} & \num{1.2} & 1 & 132 \\
    \hline
    30 & $60000\times 20280$ & \textbf{20280} & \num{1.5e-6} & $60000\times 2880$ & \textbf{2880} & \num{2.4e-4} & \num{0.24} & \textbf{0} & 174 \\
    \hline
    40 & $60000\times 27040$ & \textbf{27040} & \num{2.0e-6} & $60000\times 2880$ & \textbf{2880} & \num{2.2e-3} & \num{0.62} & \textbf{0} & 124 \\
    \hline
    50 & $60000\times 33800$ & \textbf{33800} & \num{1.3e-6} & $60000\times 2880$ & \textbf{2880} & \num{3.9e-5} & \num{0.02} & \textbf{0} & 143\\
    \hline
    60 & $60000\times 40560$ & \textbf{40560} & \num{1.1e-6} & $60000\times 2880$ & \textbf{2880} & \num{4.0e-5} & \num{0.57} & \textbf{0} & 141 \\
    \hline
    70 & $60000\times 47320$ & \textbf{47320} & \num{7.5e-7} & $60000\times 2880$ & \textbf{2880} & \num{7.1e-3} & \num{0.12} & \textbf{0} & 120 \\
   \hline
    80 & $60000\times 54080$ & \textbf{54080} & \num{5.4e-7} & $60000\times 2880$ & 2875 & \num{4.9e-18} & \num{0.11} & \textbf{0} & 140 \\
    \hline
    89 & $60000\times 60164$ & \textbf{60000} & \num{2.0e-8} & $60000\times 2880$ & \textbf{2880} & \num{8.9e-10} & \num{0.35} & \textbf{0} & 117 \\ 
    \hline
    100 & $60000\times 67600$ & \textbf{60000} & \num{1.1e-6} & $60000\times 2880$ & $2856$ & \num{8.5e-27} & \num{0.04} & \textbf{0} & 139 \\
    \hline
\end{tabular}
\end{sc}
\end{scriptsize}
\end{center}
\vskip -0.1in
\end{table*}

To illustrate Theorem \ref{theo:how_often}
we plot the rank of the feature matrices of the network in Figure \ref{fig:cnn}. 
We use the MNIST dataset with $N=60000$ training and $10000$ test samples.
We add small Gaussian noise $\mathcal{N}(0,10^{-5})$ to every training sample so that Assumption \ref{ass:different_patches} is fulfilled.
We then vary the number of convolutional filters $T_1$ of the first layer from $10$ to $100$ 
and train the corresponding network with squared loss and sigmoid activation function
using Adam \citep{Kingma2015} 
and decaying learning rate for 2000 epochs.
In Table \ref{tab:rank} we show the smallest singular value of the feature matrices
together with the corresponding training loss, training and test error.
If number of convolutional filters is large enough (\ie $T_1\geq 89$), one has
$n_1=26\times 26\times T_1\geq N=60000$, and the second condition of Theorem  \ref{theo:how_often}  is satisfied for $k=1$.
Table \ref{tab:rank} shows that the feature matrices $F_1$ have full rank in all cases (and $F_3$ in almost all cases),
in particular for $T_1\geq 89$ as shown in Theorem \ref{theo:how_often}.
As expected when the feature maps of the training samples are linearly independent after the first layer ($F_1$ has rank $60000$ for $T\geq 89$)
the training error is zero and the training loss is close to zero (the GPU uses single precision). However, as linear independence is stronger than linear separability one can achieve already for $T<89$  zero training error.

It is interesting to note that Theorem \ref{theo:how_often} explains previous empirical observations.
In particular, \cite{Czarnecki2017} have shown empirically that linear separability is often obtained 
already in the first few hidden layers of the trained networks.
This is done by attaching a linear classifier probe \citep{Alain2016} to every hidden layer in the network
after training the whole network with backpropagation.
The fact that  Theorem \ref{theo:how_often} holds even if the parameters of the bottom layers up to the wide layer $k$ 
are chosen randomly is also in line with recent empirical observations for CNN architectures that one has little loss
in performance if the weights of the initial layers are chosen randomly without training
\citep{Jarrett09,Saxe11,Yosinki2014}.

As a corollary of Theorem \ref{theo:linear_independence} we get the following universal finite sample expressivity for CNNs.
In particular, a deep CNN with scalar output can perfectly fit any scalar-valued function for a finite number of inputs
if the width of the last hidden layer is larger than the number of training samples.
\begin{corollary}[Universal Finite Sample Expressivity]\label{cor:finite_expressivity}
    Let Assumption \ref{ass:different_patches} hold for the training samples.
    Consider a standard CNN with scalar output which satisfies the conditions of Theorem \ref{theo:linear_independence} 
    at the last hidden layer $k=L-1.$ 
    Let $f_L:\RR^d\to\RR$ be the output of the network given as
    \begin{align*}
	f_L(x) = \sum_{j=1}^{n_{L-1}} \lambda_j f_{(L-1)j}(x) \quad\forall x\in\RR^d
    \end{align*}
    where $\lambda\in\RR^{n_{L-1}}$ is the weight vector of the last layer.
    Then for every target $y\in\RR^N$, 
    there exists $\Set{\lambda, (W_l,b_l)_{l=1}^{L-1}}$ so that it holds $f_L(x_i)=y_i$ for every $i\in[N].$
\end{corollary}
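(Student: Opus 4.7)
The plan is to reduce the statement directly to Theorem \ref{theo:linear_independence} and then solve a consistent linear system. First, I would invoke Theorem \ref{theo:linear_independence} at the last hidden layer $k=L-1$, whose hypotheses are assumed to hold. This yields a choice of parameters $(W_l,b_l)_{l=1}^{L-1}$ such that the feature vectors $\{f_{L-1}(x_1),\ldots,f_{L-1}(x_N)\}$ are linearly independent in $\RR^{n_{L-1}}$ (where $n_{L-1}\geq N$).

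Next, I would reformulate the target condition $f_L(x_i)=y_i$ for all $i\in[N]$ in matrix form. Since the output layer is linear with weight vector $\lambda\in\RR^{n_{L-1}}$ and no nonlinearity, stacking the $N$ equations yields
\begin{align*}
F_{L-1}\,\lambda = y,
\end{align*}
where $F_{L-1}\in\RR^{N\times n_{L-1}}$ is the output matrix of the last hidden layer, whose rows are exactly $f_{L-1}(x_i)^T$. Linear independence of these rows implies $\rank(F_{L-1})=N$, so $F_{L-1}$ has full row rank. Consequently the system $F_{L-1}\lambda=y$ is consistent for every $y\in\RR^N$; an explicit solution is given by the Moore-Penrose pseudoinverse, e.g. $\lambda = F_{L-1}^T(F_{L-1}F_{L-1}^T)^{-1}y$, since $F_{L-1}F_{L-1}^T\in\RR^{N\times N}$ is invertible by the full row rank property.

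Combining these two steps, the parameter tuple $\{\lambda,(W_l,b_l)_{l=1}^{L-1}\}$ just constructed satisfies $f_L(x_i)=y_i$ for every $i\in[N]$, which proves the corollary. There is essentially no genuine obstacle here: all the heavy lifting lies in Theorem \ref{theo:linear_independence}, and the remaining step is a one-line linear-algebra fact that a full row rank matrix admits a right inverse. The only point requiring minor care is verifying that the matrix form $F_{L-1}\lambda=y$ correctly encodes the corollary's definition of $f_L$ (this follows directly from the given formula $f_L(x)=\sum_{j=1}^{n_{L-1}}\lambda_j f_{(L-1)j}(x)$) and that the assumption $n_{L-1}\geq N$ from Theorem \ref{theo:linear_independence} ensures the existence of $N$ linearly independent rows in an $N\times n_{L-1}$ matrix, which it clearly does.
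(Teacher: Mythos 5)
Your proposal is correct and follows essentially the same route as the paper's proof: invoke Theorem \ref{theo:linear_independence} at $k=L-1$ to get $\rank(F_{L-1})=N$, then solve $F_{L-1}\lambda=y$ via $\lambda=F_{L-1}^T(F_{L-1}F_{L-1}^T)^{-1}y$. No issues.
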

The expressivity of neural networks has been well-studied, in particular in the universal approximation theorems for one hidden layer networks \citep{Cybenko1989,Hornik1989}. 
Recently, many results have shown why deep networks are superior to shallow networks in terms of
expressiveness
\citep{Delalleau2011,Telgarsky2016,Telgarsky2015,Eldan2016,Safran2017,Yarotsky2016,Poggio2016,Liang2017,Mhaskar2016,Montufar2014,
Pascanu2014,Raghu2017}
While most of these results are derived for fully connected networks, 
it seems that \cite{CohenICML2016} are the first ones who study expressivity of CNNs.
In particular, they show that CNNs with max-pooling and ReLU units are universal in the sense that
they can approximate any given function if the size of the networks is unlimited.
However, the number of convolutional filters in this result has to grow exponentially with the number of patches and they
do not allow shared weights in their result, which is a standard feature of CNNs. Corollary \ref{cor:finite_expressivity} shows universal finite sample expressivity, instead of universal function approximation, 
even for $L=2$ and $k=1$, that is
a single convolutional layer network can perfectly fit the training data 
as long as the number of hidden units is larger than the number of training samples. 

For fully connected networks, universal finite sample expressivity has been studied by \cite{Zhang2017,Quynh2017,Hardt2017}.
It is shown that a single hidden layer fully connected network with $N$ hidden units can express any training set of size $N$.
While the number of training parameters of a single hidden layer CNN with $N$ hidden units and scalar output 
is just $2N+T_1l_0$, where $T_1$ is the number of convolutional filters and $l_0$ is the size of each filter,
it  is $Nd+2N$ for fully connected networks.
If we set the width of the hidden layer of the CNN as $n_1=T_1P_0=N$ in order to fulfill the condition of Corollary \ref{cor:finite_expressivity}, then the number of training parameters of the CNN 
becomes $2N+Nl_0/P_0$, which is less than $3N$ if $l_0\leq P_0$ compared to $(d+2)N$ for the fully connected case. In practice one almost always has $l_0\leq P_0$ as $l_0$ is typically a small integer
and $P_0$ is on the order of the dimension of the input.
Thus, the number of parameters to achieve universal finite sample expressivity is for CNNs significantly smaller than 
for fully connected networks.

Obviously, in practice it is most important that the network generalizes rather than just fitting the training data.
By using shared weights and sparsity structure, CNNs seem to implicitly regularize the model
to achieve good generalization performance. Thus even though they can fit also random labels or noise  \citep{Zhang2017}
due to the universal finite sample expressivity shown in Corollary \ref{cor:finite_expressivity}, they seem still to be able
to generalize well \citep{Zhang2017}.


\section{Optimization Landscape of Deep CNNs}\label{sec:loss_surface}
In this section, we restrict our analysis to the use of least squares loss.
However, as we show later that the network can produce exactly the target output (\ie $F_L=Y$)
for some choice of parameters, all our results can also be extended to any other loss function
where the global minimum is attained at $F_L=Y$, for instance the squared Hinge-loss analyzed in \cite{Quynh2017}.
Let $\mathcal{P}$ denote the space of all parameters of the network.
The final training objective $\Phi:\mathcal{P}\to\RR$ is given as
    \begin{align}\label{eq:training_obj}
	\Phi\Big((W_l,b_l)_{l=1}^L\Big) = \frac{1}{2} \norm{F_L-Y}_F^2 
    \end{align}
    where $F_L$ is defined as in \eqref{eq:F_k}, which is also the same as
    \begin{align*}
	F_L = \sigma_{L-1}(\ldots\sigma_1(XU_1+\ones_N b_1^T)\ldots)U_L + \ones_N b_L^T,
    \end{align*}
    where $U_l=\mathcal{M}_l(W_l)$ for every $1\leq l\leq L.$ We require 
the following assumptions on the architecture of CNN.

\begin{figure*}[ht]
\vskip 0.2in
\begin{center}
    \includegraphics[width=0.7\linewidth]{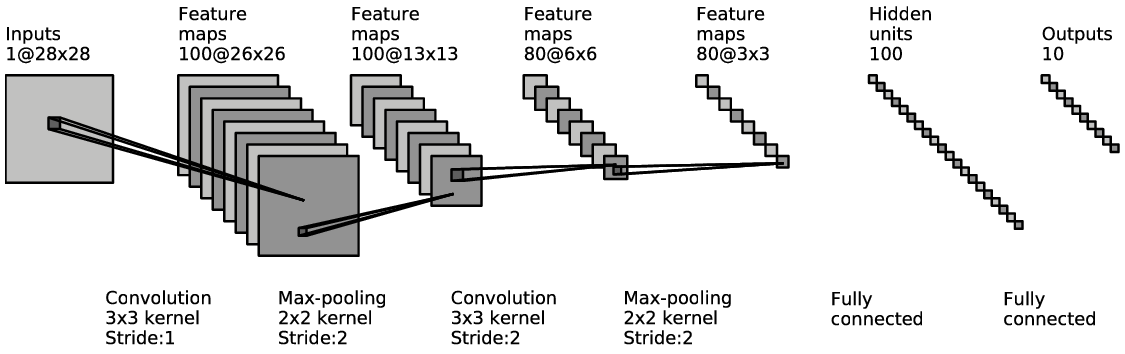}
\end{center}
\caption{
An example of CNN for a given training set of size $N\leq 100\times 26\times 26=67600.$
The width of each layer is $d=n_0=784, n_1=67600, n_2=16900, n_3=2880, n_4=720, n_5=100, n_6=m=10.$
One can see that $n_1\geq N$ and the network has pyramidal structure 
from layer $2$ till the output layer, that is, $n_2\geq\ldots\geq n_6.$}
\label{fig:cnn}
\vskip -0.2in
\end{figure*}

\begin{assumptions}[CNN Architecture]\label{ass:architecture}
    Every layer in the network is a convolutional layer or fully connected layer and the output layer is fully connected.
    Moreover, there exists some hidden layer $1\leq k\leq L-1$ such that the following holds:
    \begin{itemize}
	\item The width of layer $k$ is larger than number of training samples, that is, $n_k=T_k P_{k-1} \geq N$
	\item All the activation functions of the hidden layers $(\sigma_1,\ldots,\sigma_{L-1})$ satisfy Assumption \ref{ass:activation}
	\item $(\sigma_{k+1},\ldots,\sigma_{L-1})$ are strictly increasing or strictly decreasing, and  differentiable
	\item The network is pyramidal from layer $k+1$ till the output layer, that is, $n_{k+1}\geq\ldots\geq n_L$
    \end{itemize}
\end{assumptions}

A typical example that satisfies Assumption \ref{ass:architecture} with $k=1$ can be found in Figure \ref{fig:cnn}
where one disregards max-pooling layers and uses \eg sigmoid or softplus activation function.

In the following, let us define for every $1\leq k\leq L-1$ the subset $S_k\subseteq\mathcal{P}$ of the parameter space such that
\begin{align*}
    S_k\bydef \Setbar{(W_l,b_l)_{l=1}^L}{F_k, U_{k+2},\ldots, U_L \textrm{have full rank}}.
\end{align*}
The set $S_k$ is the set of parameters where the feature matrix at layer $k$ and all the weight matrices 
from layer $k+2$ till the output layer have full rank.
In the following, we examine conditions for global optimality in $S_k$.
It is important to note that $S_k$ covers almost the whole parameter space under an additional mild condition
on the activation function.
\begin{lemma}\label{lem:S_k}
    Let Assumption \ref{ass:different_patches} hold for the training samples
    and a deep CNN satisfy Assumption \ref{ass:architecture} for some layer $1\leq k\leq L-1.$
    If the activation functions of the first $k$ layers $(\sigma_1,\ldots,\sigma_k)$ are real analytic,
    then the complementary set $\mathcal{P}\setminus S_k$ has Lebesgue measure zero.
\end{lemma}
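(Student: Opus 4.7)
The plan is to decompose the complement $\mathcal{P} \setminus S_k$ into a finite union of \emph{bad sets}, each defined by the rank-deficiency of a single matrix: either the feature matrix $F_k$, or one of the weight matrices $U_{k+2}, \ldots, U_L$. I will then show each such bad set has Lebesgue measure zero, and since a finite union of null sets is null, the lemma follows. Throughout I will use Fubini's theorem to lift a measure-zero condition on some factor of the parameter space to a measure-zero condition on all of $\mathcal{P}$.

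For the rank-deficiency of $F_k$, I invoke Theorem \ref{theo:how_often} directly. By Assumption \ref{ass:architecture}, every layer from $1$ to $k$ is convolutional or fully connected, the width condition $n_k = T_k P_{k-1} \geq N$ holds, and the activations $(\sigma_1, \ldots, \sigma_k)$ satisfy Assumption \ref{ass:activation}; the additional hypothesis of Lemma \ref{lem:S_k} gives real analyticity. Theorem \ref{theo:how_often} then yields that the set $E$ of $(W_l, b_l)_{l=1}^k$ producing linearly dependent features has Lebesgue measure zero. Since $F_k$ is a function of only the first $k$ layers' parameters, the set in $\mathcal{P}$ where $F_k$ is rank-deficient is $E \times \mathcal{P}_{>k}$, which by Fubini has measure zero in $\mathcal{P}$.

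For each $l \in \{k+2, \ldots, L\}$, the pyramidal assumption gives $n_{l-1} \geq n_l$, so $U_l \in \RR^{n_{l-1}\times n_l}$ has full rank iff it has rank $n_l$. If layer $l$ is fully connected, then $U_l = W_l$ and rank-deficiency is the simultaneous vanishing of all $n_l \times n_l$ minors, a proper real-algebraic subvariety of $\RR^{n_{l-1}\times n_l}$, hence a null set. If layer $l$ is convolutional, Assumption \ref{ass:conv_structure} holds (as stated after its introduction) and Lemma \ref{lem:WU_measure_zero} gives the same conclusion directly. Again lifting via Fubini shows the subset of $\mathcal{P}$ where $U_l$ is rank-deficient has measure zero.

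Taking the union over $F_k$ and over $l \in \{k+2, \ldots, L\}$ produces $\mathcal{P} \setminus S_k$ as a finite union of null sets, which is null. The main obstacle is not analytical but bookkeeping: verifying that the hypotheses of Theorem \ref{theo:how_often} and Lemma \ref{lem:WU_measure_zero} are indeed all supplied by Assumption \ref{ass:architecture} together with the extra analyticity hypothesis of this lemma, and applying Fubini carefully so that null sets in one block of parameters transfer to null sets in the full product space $\mathcal{P}$.
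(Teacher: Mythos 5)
Your proof is correct and follows essentially the same route as the paper: decompose $\mathcal{P}\setminus S_k$ into the event that $F_k$ is rank-deficient (handled by Theorem \ref{theo:how_often}) and the events that some $U_l$ is rank-deficient (handled by Lemma \ref{lem:WU_measure_zero}, or a minor-vanishing argument in the fully connected case), then take a finite union of null sets. The paper leaves the Fubini-style lifting from a block of parameters to the full product space implicit; your making it explicit is a harmless refinement, not a different argument.
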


In the next key lemma, we bound the objective function in terms of its gradient magnitude w.r.t. the weight matrix of layer $k$
for which $n_k\geq N.$
For every matrix $A\in\RR^{m\times n}$, 
let $\svmin{A}$ and $\svmax{A}$ 
denote the smallest and largest singular value of $A$.
Let $\norm{A}_F=\sqrt{\sum_{i,j}A_{ij}^2}$, 
$\minnorm{A}\bydef\min\limits_{i,j}|A_{ij}|$ and $\maxnorm{A}\bydef\max\limits_{i,j}|A_{ij}|.$
From \eqref{eq:F_k}, and \eqref{eq:training_obj},
it follows that $\Phi$ can be seen as a function of $(U_l,b_l)_{l=1}^L$, and thus we can use $\nabla_{U_{k}}\Phi$. 
If layer $k$ is fully connected then $U_k=\mathcal{M}_k(W_k)=W_k$ 
and thus $\nabla_{U_k}\Phi=\nabla_{W_k}\Phi.$
Otherwise, if layer $k$ is convolutional then we note that 
$\nabla_{U_k}\Phi$ is ``not'' the true gradient of the training objective
because $U_k$ is not the true optimization parameter but $W_k.$
In this case, the true gradient of $\Phi$ w.r.t. to the true parameter matrix $W_k$ which consists of convolutional filters
can be computed via the chain rule as
\begin{align*}
\frac{\partial \Phi}{\partial (W_k)_{rs}} = \sum_{i,j} \frac{\partial \Phi}{\partial (U_k)_{ij}}\frac{\partial (U_k)_{ij}}{\partial (W_k)_{rs}}
\end{align*}
Note that even though we write the partial derivatives w.r.t. the matrix elements, $\nabla_{W_k} \Phi$ resp. $\nabla_{U_k}\Phi$ are
the matrices of the same dimension as $W_k$ resp. $U_k$ in the following.
\begin{lemma}\label{lem:bounds}
    Consider a deep CNN satisfying Assumption \ref{ass:architecture} for some hidden layer $1\leq k\leq L-1$.
    Then it holds 
    \begin{align*}
    & \Fnorm{\nabla_{U_{k+1}}\Phi}\\ \geq &
	\svmin{F_k} 
	\Big( \prod_{l=k+1}^{L-1} \svmin{U_{l+1}} \minnorm{\sigma_l'(G_l)} \Big) 
	\Fnorm{F_L-Y}
    \end{align*}
    and 
    \begin{align*}
	&\Fnorm{\nabla_{U_{k+1}}\Phi}\\ \leq  &
	\svmax{F_k} 
	\Big( \prod_{l=k+1}^{L-1} \svmax{U_{l+1}} \maxnorm{\sigma_l'(G_l)} \Big)
	\Fnorm{F_L-Y}.
    \end{align*}
\end{lemma}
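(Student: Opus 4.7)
The plan is to express $\nabla_{U_{k+1}}\Phi$ in closed form via backpropagation, then bound each factor by elementary matrix norm inequalities. First, since $G_{k+1} = F_k U_{k+1} + \ones_N b_{k+1}^T$, the chain rule gives
\begin{align*}
\nabla_{U_{k+1}}\Phi = F_k^T \, \nabla_{G_{k+1}}\Phi.
\end{align*}
To compute $\nabla_{G_{k+1}}\Phi$, I would unroll backpropagation from the output: starting from $\nabla_{G_L}\Phi = F_L - Y$, and using $G_l = F_{l-1}U_l + \ones_N b_l^T$ together with $F_{l-1} = \sigma_{l-1}(G_{l-1})$, one gets the recursion
\begin{align*}
\nabla_{G_{l-1}}\Phi = \bigl( \nabla_{G_l}\Phi \cdot U_l^T \bigr) \odot \sigma_{l-1}'(G_{l-1})
\end{align*}
for $l=L,L-1,\ldots,k+2$. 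Iterating this recursion down to layer $k+1$ produces a nested expression in $F_L-Y$, the matrices $U_{k+2},\ldots,U_L$, and the Hadamard factors $\sigma_l'(G_l)$ for $l=k+1,\ldots,L-1$.

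Next, I would invoke three elementary Frobenius-norm inequalities. For any matrices of compatible size: (i) $\Fnorm{MN} \geq \svmin{M}\Fnorm{N}$ when $M$ has full column rank, and $\Fnorm{MN} \leq \svmax{M}\Fnorm{N}$ unconditionally — both follow from $\Fnorm{MN}^2 = \tr(N^T M^T M N)$ together with the bound $\lambda_{\min}(M^TM)\,\tr(N^TN) \leq \tr(M^TM\,NN^T) \leq \lambda_{\max}(M^TM)\,\tr(N^TN)$ for PSD matrices; (ii) symmetrically, $\Fnorm{NB} \geq \svmin{B}\Fnorm{N}$ when $B$ has full row rank, and $\Fnorm{NB} \leq \svmax{B}\Fnorm{N}$; and (iii) the entrywise bounds $\minnorm{B}\Fnorm{A} \leq \Fnorm{A \odot B} \leq \maxnorm{B}\Fnorm{A}$, which are immediate from $\Fnorm{A \odot B}^2 = \sum_{ij} A_{ij}^2 B_{ij}^2$. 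Apply (i) to the outer factor $F_k^T$ (which has $n_k \geq N$ rows, so has full column rank exactly when $F_k$ has full row rank), apply (ii) to each factor $U_{l+1}^T$ for $l=k+1,\ldots,L-1$ (which has $n_{l+1} \leq n_l$ rows by the pyramidal assumption, so has full row rank exactly when $U_{l+1}$ has full column rank), and apply (iii) to each Hadamard factor $\sigma_l'(G_l)$. The upper and lower bounds of the lemma then fall out as telescoping products.

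The main subtlety, rather than an obstacle, is the rank accounting for the lower bound. The pyramidal structure $n_{k+1}\geq\ldots\geq n_L$ ensures that each $U_{l+1}^T$ ($l\geq k+1$) is a wide matrix, so $\svmin{U_{l+1}^T}=\svmin{U_{l+1}}$, and the bound (ii) is non-trivial precisely when $U_{l+1}$ has full rank. Likewise $\svmin{F_k}>0$ iff the wide matrix $F_k$ has full row rank, which requires $n_k\geq N$ (given) and rank-$N$ features. If either condition fails, the corresponding smallest singular value is zero and the lower bound reduces to the trivial statement $\Fnorm{\nabla_{U_{k+1}}\Phi}\geq 0$; no separate case analysis is needed. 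The upper bound requires no rank hypotheses and follows directly from the dual inequalities.
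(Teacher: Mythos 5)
Your proposal is correct and follows essentially the same route as the paper: the backpropagation recursion $\Delta_l=(\Delta_{l+1}U_{l+1}^T)\circ\sigma_l'(G_l)$ with $\nabla_{U_{k+1}}\Phi=F_k^T\Delta_{k+1}$, followed by the singular-value bounds for matrix products (using $n_k\geq N$ and the pyramidal widths to get the non-trivial direction) and the entrywise bounds for the Hadamard factors. The only cosmetic differences are that the paper handles the outer factor $F_k^T$ via vectorization with a Kronecker product and transposes $\Delta_{l+1}U_{l+1}^T$ before applying its one-sided product lemma, whereas you apply the left/right Frobenius inequalities directly.
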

Our next main result is motivated by the fact that empirically when training over-parameterized neural networks with shared weights and sparsity structure like CNNs, there seem to be no problems with sub-optimal local minima.
In many cases, even when training labels are completely random, 
local search algorithms like stochastic gradient descent can converge to a solution 
with almost zero training error \citep{Zhang2017}.
To understand better this phenomenon, we first characterize in the following Theorem \ref{theo:loss_surface} 
the set of points in parameter space with zero loss,
and then analyze in Theorem \ref{theo:loss_surface_special_case} 
the loss surface for a special case of the network.
We emphasize that our results hold for standard deep CNNs with 
convolutional layers with shared weights and fully connected layers.
\begin{theorem}[Conditions for Zero Training Error]\label{theo:loss_surface}
    Let Assumption \ref{ass:different_patches} hold for the training sample and suppose that the 
    CNN architecture satisfies Assumption \ref{ass:architecture} for some hidden layer $1\leq k\leq L-1$.
    Let $\Phi:\mathcal{P}\to\RR$ be defined as in \eqref{eq:training_obj}.
    Given any point $(W_l,b_l)_{l=1}^L\in S_k.$
    Then it holds that $\Phi\Big( (W_l,b_l)_{l=1}^L \Big)=0$ if and only if $\nabla_{U_{k+1}}\Phi\Big|_{(W_l,b_l)_{l=1}^L}=0.$
\end{theorem}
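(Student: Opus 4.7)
The plan is to prove both directions as direct consequences of the two bounds in Lemma~\ref{lem:bounds}, with the ``only if'' direction being where the hypothesis that the point lies in $S_k$ does the real work.

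For the ``if'' direction, I would assume $\Phi\big((W_l,b_l)_{l=1}^L\big)=0$. By definition of $\Phi$ this means $\Fnorm{F_L-Y}=0$. Plugging into the upper bound of Lemma~\ref{lem:bounds} immediately gives $\Fnorm{\nabla_{U_{k+1}}\Phi}\le 0$, hence $\nabla_{U_{k+1}}\Phi=0$. This direction requires no full-rank assumption and no structural hypothesis beyond boundedness of the coefficients, which holds because $\svmax{\cdot}$ and $\maxnorm{\cdot}$ are finite.

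For the ``only if'' direction, assume $\nabla_{U_{k+1}}\Phi=0$ and $(W_l,b_l)_{l=1}^L\in S_k$. The lower bound of Lemma~\ref{lem:bounds} yields
\begin{equation*}
0 \;\ge\; \svmin{F_k}\Big(\prod_{l=k+1}^{L-1}\svmin{U_{l+1}}\minnorm{\sigma_l'(G_l)}\Big)\Fnorm{F_L-Y}.
\end{equation*}
To conclude $\Fnorm{F_L-Y}=0$ (hence $\Phi=0$), I would verify that each of the scalar factors multiplying $\Fnorm{F_L-Y}$ is strictly positive on $S_k$. First, $F_k\in\RR^{N\times n_k}$ has full rank by definition of $S_k$, and since $n_k\ge N$ by Assumption~\ref{ass:architecture}, this means $F_k$ has rank $N$, so $\svmin{F_k}>0$. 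Second, for each $l=k+1,\ldots,L-1$, the matrix $U_{l+1}\in\RR^{n_l\times n_{l+1}}$ has full rank by definition of $S_k$, and the pyramidal condition $n_l\ge n_{l+1}$ makes this full column rank, so $\svmin{U_{l+1}}>0$.

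The only remaining factors are $\minnorm{\sigma_l'(G_l)}$ for $l=k+1,\ldots,L-1$. By Assumption~\ref{ass:architecture} each such $\sigma_l$ is differentiable and strictly monotone; I would argue that for the activation functions covered by Assumption~\ref{ass:activation} and allowed here (sigmoid, softplus), the derivative is in fact strictly positive everywhere on $\RR$, so $\sigma_l'(G_l)$ has no zero entries and $\minnorm{\sigma_l'(G_l)}>0$. This is the one step that requires care, since strict monotonicity and differentiability alone permit isolated zeros of $\sigma'$ (as in $t\mapsto t^3$); either one invokes the specific activations of Lemma~\ref{lem:activation_exp} whose derivatives never vanish, or one reads Assumption~\ref{ass:architecture} as implicitly requiring $\sigma_l'\neq 0$ everywhere (consistent with how it is used elsewhere). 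Once this is granted, the lower-bound inequality forces $\Fnorm{F_L-Y}=0$ and hence $\Phi=0$, completing the equivalence.
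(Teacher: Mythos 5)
Your proof is correct and follows essentially the same route as the paper's: the upper bound of Lemma~\ref{lem:bounds} gives the forward direction, and the lower bound together with the positivity of $\svmin{F_k}$, $\svmin{U_{l+1}}$ and $\minnorm{\sigma_l'(G_l)}$ on $S_k$ (using $n_k\geq N$ and the pyramidal structure) gives the converse. Your caveat about $\minnorm{\sigma_l'(G_l)}$ is well taken --- the paper simply asserts that strict monotonicity plus differentiability yields a non-vanishing derivative, which is not literally true (cf.\ $t\mapsto t^3$), so Assumption~\ref{ass:architecture} indeed has to be read as implicitly requiring $\sigma_l'\neq 0$ everywhere or restricted to the concrete activations of Lemma~\ref{lem:activation_exp}, exactly as you suggest.
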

\begin{proof}
    If $\Phi\Big( (W_l,b_l)_{l=1}^L \Big)=0$ then it follows from the upper bound of Lemma \ref{lem:bounds} 
    that $\nabla_{U_{k+1}}\Phi=0.$
    For reverse direction, one has $(W_l,b_l)_{l=1}^L\in S_k$
    and thus $\rank(F_k)=N$ and $U_{l}$ has full rank for every $l\in[k+2,L].$
    Thus it holds $\svmin{F_k}>0$ and $\svmin{U_l}>0$ for every $l\in[k+2,L].$
    Moreover, $(\sigma_{k+1},\ldots,\sigma_{L-1})$ have non-zero derivative by Assumption \ref{ass:architecture}
    and thus $\minnorm{\sigma_l'(G_l)} > 0$ for every $l\in[k+1,L-1].$
    This combined with Lemma \ref{lem:bounds} leads to 
    $\Phi\Big (W_l,b_l)_{l=1}^L \Big)=\Fnorm{F_L-Y}=0.$
\end{proof}
Lemma \ref{lem:S_k} shows that the set of points which are not covered by Theorem \ref{theo:loss_surface} 
has measure zero if the first $k$ layers have analytic activation functions. The necessary and sufficient condition of Theorem \ref{theo:loss_surface} is rather intuitive
as it requires the gradient of the training objective to vanish w.r.t. the full weight matrix of layer $k+1$
regardless of the architecture of this layer.
It turns out that if layer $k+1$ is fully connected, 
then this condition is always satisfied at a critical point,
in which case we obtain that every critical point in $S_k$ is a global minimum with exact zero training error.
This is shown in the next Theorem \ref{theo:loss_surface_special_case}, 
where we consider a classification task with $m$ classes,
$Z\in\RR^{m\times m}$ is the full rank class encoding matrix e.g. the identity matrix
and $(X,Y)$ the training sample such that $Y_{i:}=Z_{j:}$ whenever the training sample $x_i$ belongs to class $j$
for every $i\in[N],j\in[m].$ 

\begin{theorem}[Loss Surface of CNNs]\label{theo:loss_surface_special_case}
    Let $(X,Y,Z)$ be a training set for which Assumption \ref{ass:different_patches} holds,
    the CNN architecture satisfies Assumption \ref{ass:architecture} for some hidden layer $1\leq k\leq L-1$,
    and layer $k+1$ is fully connected. 
    Let $\Phi:\mathcal{P}\to\RR$ be defined as in \eqref{eq:training_obj}.
    Then the following holds
    \begin{itemize}
        \item Every critical point $(W_l,b_l)_{l=1}^L \in  S_k$ is a global minimum with $\Phi\Big((W_l,b_l)_{l=1}^L\Big) = 0$
	\item There exist infinitely many global minima $(W_l,b_l)_{l=1}^L\in  S_k$ with $\Phi\Big((W_l,b_l)_{l=1}^L\Big) = 0$
    \end{itemize}
\end{theorem}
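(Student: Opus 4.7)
The two claims will be handled separately.

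For the first claim, at any critical point of $\Phi$ the gradient with respect to every parameter vanishes; in particular $\nabla_{W_{k+1}}\Phi=0$. Because layer $k{+}1$ is fully connected, the reformulation gives $U_{k+1}=\mathcal{M}_{k+1}(W_{k+1})=W_{k+1}$, so $\nabla_{U_{k+1}}\Phi=\nabla_{W_{k+1}}\Phi=0$. Since the point is assumed to lie in $S_k$, Theorem \ref{theo:loss_surface} immediately yields $\Phi=0$. This step is essentially a one-line invocation of the characterization already proved.

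For the second claim, the plan is to exhibit one configuration in $S_k$ with $\Phi=0$ and show it sits inside a positive-dimensional affine family of such configurations. The construction has three stages. (a) Invoke Theorem \ref{theo:linear_independence} to fix $(W_l,b_l)_{l=1}^k$ such that $F_k$ has rank $N$ and every $U_l$, $l\le k$, has full rank. (b) For each $l=k{+}2,\dots,L$ pick some $\bar{W}_l$ such that $\bar U_l\bydef\mathcal{M}_l(\bar W_l)$ has full column rank (possible by Assumption \ref{ass:conv_structure}, using $n_{l-1}\ge n_l$ from the pyramidal structure), and set $W_l=\alpha_l\bar W_l$, i.e.\ $U_l=\alpha_l\bar U_l$, for scalars $\alpha_l>0$ to be fixed later. (c) Back-propagate the target $F_L^{\mathrm{target}}=Y$ through the layers: at each $l$ from $L$ down to $k{+}2$, invert $\sigma_l$ on its open range (well defined, since $\sigma_l$ is continuous and strictly monotonic by Assumption \ref{ass:architecture}) to obtain $G_l^{\mathrm{target}}=\sigma_l^{-1}(F_l^{\mathrm{target}})$ (with $G_L^{\mathrm{target}}=Y$), and then use the full column rank of $U_l$ to solve $F_{l-1}^{\mathrm{target}}U_l+\ones_N b_l^{T}=G_l^{\mathrm{target}}$ for $F_{l-1}^{\mathrm{target}}$ via a left-inverse of $U_l$.

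The main obstacle is to ensure that each back-propagated $F_{l-1}^{\mathrm{target}}$ has entries in the open range $R_{l-1}$ of $\sigma_{l-1}$, so that the induction can continue. Pick any $c_{l-1}\in R_{l-1}$ and write $F_{l-1}^{\mathrm{target}}=c_{l-1}\ones+E_{l-1}$. Setting $b_l=-c_{l-1}\,U_l^{T}\ones_{n_{l-1}}$ absorbs the constant part and reduces the equation to $E_{l-1}U_l=G_l^{\mathrm{target}}$, whose solution $E_{l-1}=G_l^{\mathrm{target}}\bar U_l^{+}/\alpha_l$ is of order $1/\alpha_l$; choosing the $\alpha_l$'s successively large enough forces every entry of $F_{l-1}^{\mathrm{target}}$ into $R_{l-1}$. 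The backward pass terminates with a target $G_{k+1}^{\mathrm{target}}\in\RR^{N\times n_{k+1}}$. Since $F_k$ has rank $N$ and layer $k{+}1$ is fully connected, the affine map $(U_{k+1},b_{k+1})\mapsto F_k U_{k+1}+\ones_N b_{k+1}^{T}$ is surjective onto $\RR^{N\times n_{k+1}}$ and its preimage of $G_{k+1}^{\mathrm{target}}$ is an affine subspace of dimension $(n_k-N+1)n_{k+1}\ge n_{k+1}\ge 1$. Every point in this preimage, combined with the parameters from (a) and (b), gives a configuration in $S_k$ (because $F_k$ and $U_{k+2},\dots,U_L$ are untouched) with $F_L=Y$ and hence $\Phi=0$, producing an infinite family of global minima.
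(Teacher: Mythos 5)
Your proof of the first bullet is exactly the paper's: at a critical point $\nabla_{W_{k+1}}\Phi=0$, fully-connectedness of layer $k+1$ gives $\nabla_{U_{k+1}}\Phi=\nabla_{W_{k+1}}\Phi=0$, and Theorem \ref{theo:loss_surface} finishes. Your second bullet, however, takes a genuinely different route, and it is correct. The paper argues \emph{forward}: after fixing the first $k$ layers via Theorem \ref{theo:linear_independence} so that $\rank(F_k)=N$, it uses the fully connected layer $k+1$ to collapse all samples of a class onto a single representative row of a matrix $E\in\RR^{m\times n_{k+1}}$ with pairwise distinct entries, then applies Theorem \ref{theo:linear_independence} a \emph{second} time to the subnetwork from layer $k+1$ to $L$ with $E$ as an $m$-sample training set (using $n_{L-1}\geq n_L=m$ from the pyramidal assumption) to get $m$ linearly independent class features at layer $L-1$, and finally solves the last linear layer for the embedding $Z$. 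You argue \emph{backward}: fix full-column-rank top weights $U_l=\alpha_l\bar U_l$ (legitimate for convolutional layers by linearity of $\mathcal{M}_l$ and Assumption \ref{ass:conv_structure}, with full column rank following from $n_{l-1}\geq n_l$), invert the strictly monotone activations and left-invert the $U_l$ to pull $Y$ down to a target pre-activation at layer $k+1$, using the bias trick and successively large $\alpha_l$ to keep each intermediate target inside the open range of $\sigma_{l-1}$; the final system in $(U_{k+1},b_{k+1})$ is solvable because $F_k$ has full row rank. Both constructions rely on Assumption \ref{ass:architecture} in the same places (fully connected layer $k+1$, pyramidal widths), but yours leans on strict monotonicity of $\sigma_{k+1},\dots,\sigma_{L-1}$ where the paper re-invokes Assumption \ref{ass:activation} through Theorem \ref{theo:linear_independence}. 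A concrete advantage of your version is that it exhibits an explicit affine family of global minima of dimension $(n_k-N+1)n_{k+1}\geq 1$, which settles the ``infinitely many'' clause directly; the paper's proof constructs a single point and leaves the infinitude implicit in the freedom of its choices (e.g.\ of $E$).
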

Theorem \ref{theo:loss_surface_special_case} shows that 
the loss surface for this type of CNNs has a rather simple structure
in the sense that every critical point in $S_k$ must be a global minimum with zero training error.  Note that if the activation
functions up to layer $k$ are analytic, the complement of $S_k$ has measure zero (see Lemma \ref{lem:S_k}).
For those critical points lying outside $S_k$, it must hold that either one of the weight matrices $\Set{U_{k+2},\ldots,U_L}$ 
has low rank or the set of feature vectors at layer $k$ is not linearly independent (\ie $F_k$ has low rank).
Obviously, some of these critical points can also be global minima, 
but we conjecture that they cannot be suboptimal local minima due to the following reasons.
First, it seems unlikely that a critical point with a low rank weight matrix is a suboptimal local minimum 
as this would imply that all possible full rank perturbations of the current solution must have larger/equal objective value.
However, there is no term in the loss function which favors low rank solutions.
Even for linear networks, it has been shown by \cite{Baldi88} that all the critical points with low rank 
weight matrices have to be saddle points and thus cannot be suboptimal local minima.
Second, a similar argument applies to the case where one has a critical point outside $S_k$ 
such that the features are not linearly independent.
In particular, any neighborhood of such a critical point contains points 
which have linearly independent features at layer $k$,
from which it is easy to reach zero loss if one fixes the parameters of the first $k$ layers and optimizes the loss w.r.t. the remaining ones.
This implies that every small neighborhood of the critical point 
should contain points from which there exists a descent path that leads to a global minimum with zero loss, 
which contradicts the fact that the critical point is a suboptimal local minimum.
In summary, if there are critical points lying outside the set $S_k$, 
then it is very ``unlikely'' that these are suboptimal local minima but rather also global minima, saddle points or local maxima. 

It remains an interesting open problem if the result of Theorem \ref{theo:loss_surface_special_case} 
can be transferred to the case where layer $k+1$ is also convolutional.
In any case whether layer $k+1$ is fully connected or not, one might assume that a solution 
with zero training error still exists as it is usually the case for practical over-parameterized networks.
However, note that Theorem \ref{theo:loss_surface} shows that at those points where the loss is zero,
the gradient of $\Phi$ w.r.t. $U_{k+1}$ must be zero as well.

An interesting special case of Theorem \ref{theo:loss_surface_special_case} is when the network is fully connected
in which case all the results of Theorem \ref{theo:loss_surface_special_case} hold without any modifications.
This can be seen as a formal proof for 
the implicit assumption used in the recent work \citep{Quynh2017} 
that there exists a global minimum with absolute zero training error for the class of  fully connected, deep and wide networks.

\section{Conclusion}
We have analyzed the expressiveness and loss surface of CNNs in realistic and practically relevant settings. As state-of-the-art networks
fulfill exactly or approximately the condition to have a sufficiently wide convolutional layer, we think that our results help to understand why
current CNNs can be trained so effectively. It would be interesting to discuss the loss surface for cross-entropy loss, which currently does not
fit into our analysis as the global minimum does not exist when the data is linearly separable.

\section*{Acknowledgements}
The authors would like to thank the reviewers for their helpful comments on the paper
and Maksym Andriushchenko for helping us to set up the experiment on the rank of features.

\bibliography{regul}
\bibliographystyle{icml2018}


\ifpaper
\appendix

\section{Tools}
The following key property of real analytic functions is helpful to prove our Theorem \ref{theo:how_often} and Lemma \ref{lem:WU_measure_zero}.
\begin{lemma}\cite{Dan15,Boris15}\label{lem:zeros_of_analytic}
    If $f:\RR^n\to\RR$ is a real analytic function which is not identically zero
    then the set $\Setbar{x\in\RR^n}{f(x)=0}$ has Lebesgue measure zero.
\end{lemma}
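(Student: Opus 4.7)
The plan is to prove this by induction on the dimension $n$, reducing the higher-dimensional case to the one-dimensional case via Fubini's theorem. The base case $n=1$ rests on the classical identity theorem: if a real analytic $f:\RR\to\RR$ has a zero set with an accumulation point, then $f\equiv 0$ on a neighborhood and hence, by the connectedness of $\RR$ together with analytic continuation, on all of $\RR$. So if $f\not\equiv 0$, its zeros are isolated, hence at most countable, hence Lebesgue measure zero.

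For the inductive step, I would first reduce to a local statement: it suffices to show that for every open ball $B\subset\RR^n$ on which the Taylor series of $f$ converges to $f$, the set $Z(f)\cap B$ has measure zero, since $\RR^n$ is a countable union of such balls. Fix such a ball $B$ centered at the origin, and write $x=(x',x_n)$ with $x'\in\RR^{n-1}$. By Fubini,
\begin{equation*}
    m_n\bigl(Z(f)\cap B\bigr) \;=\; \int_{B'} m_1\bigl(\{x_n : (x',x_n)\in Z(f)\cap B\}\bigr)\,dx',
\end{equation*}
where $B'$ is the projection of $B$ onto $\RR^{n-1}$. For each fixed $x'\in B'$, the slice $f(x',\cdot)$ is real analytic on a nonempty open interval. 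Either it is identically zero there, or by the base case its zero set in that interval has $m_1$-measure zero. So the integrand is zero outside the set
\begin{equation*}
    A \;=\; \bigl\{\, x'\in B' \;:\; f(x',x_n)=0 \text{ for all } x_n \text{ in the slice}\,\bigr\},
\end{equation*}
and the whole task reduces to proving $A$ has $(n-1)$-dimensional Lebesgue measure zero.

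To control $A$, I would introduce the analytic coefficient functions $g_k(x')\bydef\partial_{x_n}^k f(x',0)$, each real analytic on $\RR^{n-1}$. For any $x'\in A$, the slice $f(x',\cdot)$ vanishes identically on an interval around $0$, so all its Taylor coefficients at $0$ vanish; hence $A\subseteq\bigcap_{k\ge 0}\{x':g_k(x')=0\}$. Now I split into two cases. If some $g_k$ is not identically zero on $\RR^{n-1}$, the inductive hypothesis gives that $\{g_k=0\}$ has $(n-1)$-dimensional measure zero, and therefore so does $A$. If instead every $g_k$ is identically zero on $\RR^{n-1}$, then for every $x'\in B'$ the Taylor series of $f(x',\cdot)$ at $0$ is the zero series, so $f$ vanishes on the open set $B$, and by the identity theorem applied on the connected domain $\RR^n$ we get $f\equiv 0$, contradicting the hypothesis.

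The main subtlety I anticipate is the clean invocation of the identity theorem for \emph{real} analytic functions (as opposed to the complex case where it is standard), specifically ensuring that vanishing on a nonempty open set implies vanishing on all of $\RR^n$; this is exactly where connectedness of $\RR^n$ and the fact that the set of points at which all mixed partials of $f$ vanish is both open and closed in the analytic setting are used. A secondary bookkeeping point is the reduction to balls on which the Taylor expansion represents $f$, which is harmless since $f$ is analytic everywhere but must be stated explicitly to justify passing from Taylor-coefficient information to pointwise vanishing of $f$.
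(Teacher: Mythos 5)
The paper does not actually prove this lemma---it is imported from \cite{Dan15,Boris15} without proof---and your argument is correct and is essentially the standard proof given in those references: induction on the dimension via Fubini, with the one-dimensional identity theorem as the base case and the analytic Taylor-coefficient functions $g_k(x')=\partial_{x_n}^k f(x',0)$ carrying the inductive step. The only cosmetic point is that your base case is stated for functions analytic on all of $\RR$ but is invoked for slices analytic on an open interval; the identical isolated-zeros argument applies verbatim on an interval (and, since $B$ is centered at the origin, every nonempty slice indeed contains $x_n=0$, so the $g_k$ capture exactly the slices in $A$), so nothing is missing.
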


\section{Proof of Lemma \ref{lem:WU_measure_zero}}
    Since $U_k=\mathcal{M}_k(W_k)\in\RR^{n_{k-1}\times n_k}$ and $\mathcal{M}_k$ is a linear map, 
    every entry of $U_k$ is a linear function of the entries of $W_k$.
    Let $m=\min(n_{k-1}, n_k)$, then the set of low rank matrices $U_k$ is characterized by a system of equations 
    where the $\binom{\max(n_{k-1}, n_k)}{m}$ determinants of all $m\times m$ sub-matrices of $U_k$ are zero. 
    As the determinant is a polynomial in the entries of the matrix
    and thus a real analytic function, and the composition of analytic functions is again analytic,
    we get that each determinant  is a real analytic function of $W_k$.
    By Assumption \ref{ass:conv_structure}, there exists at least one $W_k$ such that one of these determinants is non-zero.
   Thus by Lemma \ref{lem:zeros_of_analytic},
    the set of $W_k$ where this determinant is zero has Lebesgue measure zero.
    As all the submatrices need to have low rank in order that $U_k$ has low rank, 
    we get that the set of $W_k$ where $U_k$ has low rank has Lebesgue measure zero. 

\section{Proof of Lemma \ref{lem:activation_exp}}
    \begin{itemize}
	\item ReLU:
	It holds for every $t<0$ that $\sigma(t)=\max(0,t)=0 < e^{t}$,
	and for $t\geq 0$ that $\sigma(t)=t < t+1.$
	Thus ReLU satisfies the second condition of Assumption \ref{ass:activation}.
	
	\item Sigmoid: 
	It holds that
	\begin{align*}
	    \lim\limits_{t\to-\infty} \frac{1}{1+e^{-t}} = 0,\quad
	    \lim\limits_{t\to\infty} \frac{1}{1+e^{-t}} = 1 .
	\end{align*}
	Thus $\sigma$ satisfies the first condition of Assumption \ref{ass:activation}.
	
	
	\item Softplus:
	Since $1+e^{\alpha t} \leq 2e^{\alpha t}$ for every $t\geq 0$, it holds for every $t\geq 0$ that
	\begin{align*} 
	    0 \leq \sigma_\alpha(t) 
	    &= \frac{1}{\alpha}\log(1+e^{\alpha t}) \\
	    &\leq \frac{1}{\alpha}\log(2 e^{\alpha t}) \\
	    &= \frac{\log(2)}{\alpha} + t .
	\end{align*}
	Moreover, since $\log(1+t)\leq t$ for $t>0,$ it holds $\log(1+e^{\alpha t}) \leq e^{\alpha t}$ for every $t\in\RR$. 
	Thus it holds that $0 \leq \sigma_\alpha(t) \leq \frac{ e^{\alpha t}}{\alpha}$ for every $t<0.$
	Therefore $\sigma_\alpha$ satisfies the second condition of Assumption \ref{ass:activation}
	for $\rho_1=1/\alpha,\rho_2=\alpha,\rho_3=1,\rho_4=\log(2)/\alpha.$
	
    \end{itemize}

\section{Proof of Theorem \ref{theo:linear_independence}}
To prove Theorem \ref{theo:linear_independence},
we first show that Assumption \ref{ass:different_patches} 
can be transported from the input to the output layer.
\begin{lemma}\label{lem:transport_different_patches}
    Let Assumption \ref{ass:different_patches} hold for the training sample.
    Consider a standard deep CNN architecture which satisfies the following
    \begin{enumerate}
	\item The first layer is either convolutional or fully connected 
	while all the other layers can be convolutional, fully connected or max-pooling
	\item $(\sigma_1,\ldots,\sigma_L)$ are continuous and non-constant activation functions
    \end{enumerate}
    Then for every layer $1\leq k\leq L$, there exist a set of parameters of the first $k$ layers $(W_l,b_l)_{l=1}^k$ 
    such that it holds $f_k^p(x_i)\neq f_k^q(x_j)$ for every $p,q\in[P_k],i,j\in[N],i\neq j.$
    Moreover, $(W_l,b_l)_{l=1}^k$ can be chosen in such a way that, except for max-pooling layers,
    all the weight matrices $U_l=\mathcal{M}_l(W_l)$
    have full rank for every $1\leq l\leq k.$
\end{lemma}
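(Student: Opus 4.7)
The plan is to proceed by induction on $k$. The base case $k=0$ is immediate from Assumption~\ref{ass:different_patches}, since no parameters are yet involved. For the inductive step I assume the conclusion holds through layer $k-1$ and split into three cases according to the type of layer $k$: fully connected, convolutional, or max-pooling.

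For a fully connected or convolutional layer $k$, my strategy is to use a single output coordinate to separate all $N$ samples. I first pick a direction $w\in\RR^{l_{k-1}}$ generically so that the inner products $a_i := \langle w, f_{k-1}^{1}(x_i)\rangle$ are pairwise distinct. This is possible because by the inductive hypothesis $f_{k-1}^{1}(x_i)\neq f_{k-1}^{1}(x_j)$ for $i\neq j$, so the set of bad $w$ is a finite union of hyperplanes in $\RR^{l_{k-1}}$. Using Assumption~\ref{ass:conv_structure} together with Lemma~\ref{lem:WU_measure_zero}, I then build $W_k$ so that its first filter equals $w$ (up to the rescaling below) and $U_k=\mathcal{M}_k(W_k)$ has full rank; this is possible because full rank is a generic condition on $W_k$ and only one column is constrained.

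The core step is then to find a scale $c>0$ and bias $b\in\RR$ such that the values $\sigma_k(ca_i+b)$ are pairwise distinct, at which point I set the first filter to $cw$ and the first bias entry to $b$. I would prove existence of such $(c,b)$ by a Baire category argument. For each pair $i\neq j$ define the continuous function $D_{ij}(c,b) := \sigma_k(ca_i+b)-\sigma_k(ca_j+b)$ on $\RR^2$. If its zero set had non-empty interior, then on an open rectangle the identity $\sigma_k(u)=\sigma_k\bigl(u+c(a_i-a_j)\bigr)$ would hold for $u$ in a sub-interval of $\RR$, and extending via iterated shifts produces a period $c(a_i-a_j)$ of $\sigma_k$ on all of $\RR$; letting $c$ range over an interval yields a continuum of periods of $\sigma_k$, hence arbitrarily small periods, and by continuity $\sigma_k$ would be constant, contradicting the hypothesis. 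Thus each zero set is closed and nowhere dense, their finite union $\bigcup_{i\neq j}\{D_{ij}=0\}$ cannot cover $\RR^2$, and a suitable $(c,b)$ exists. The remaining biases and columns of $W_k$ are then perturbed generically so that $U_k$ retains full rank.

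For a max-pooling layer $k$ no new parameters are introduced, so the inductive step must rely on a stronger property at the preceding parameterised layer, and this is where I expect the main obstacle. The subtlety is that two distinct vectors in $\RR^{l_{k-1}}$ can share the same maximum, so distinctness of patches does not propagate through a max for free. The plan is to strengthen the induction to guarantee that, at the last parameterised layer before $k$, all scalar entries $\bigl(f_{k-1}^{p}(x_i)\bigr)_r$ appearing in any patch are mutually distinct across the triples $(i,p,r)$. This can again be enforced by the same Baire-category construction applied to each pair of such coordinates, since the bad set is a finite union of nowhere-dense zero sets of continuous functions of the free parameters. Once such entry-wise genericity is available, the maxima used in the max-pool are distinct across different $(i,p)$, and hence the patches $f_k^{p}(x_i)$ at layer $k$ remain distinct across samples, completing the induction.
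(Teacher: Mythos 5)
Your overall induction and the decision to strengthen the invariant to entry-wise distinctness (so that max-pooling can be handled) match the paper's proof, which maintains exactly the invariant $f_k(x_i)_h \neq f_k(x_j)_v$ for all $h,v$ and $i\neq j$. But the core separation step — the Baire category argument producing $(c,b)$ with the values $\sigma_k(ca_i+b)$ pairwise distinct — has a genuine gap, and it fails precisely for ReLU, which the lemma must cover since it only assumes $\sigma_k$ is continuous and non-constant. Your claimed dichotomy is: if the zero set of $D_{ij}(c,b)=\sigma_k(ca_i+b)-\sigma_k(ca_j+b)$ has nonempty interior, then $\sigma_k$ is globally constant. Two things go wrong. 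First, the ``iterated shifts'' step is unjustified: from $\sigma_k(u)=\sigma_k(u+T)$ holding only for $u$ in a bounded interval $I$ you cannot iterate to $\sigma_k(u)=\sigma_k(u+2T)$, since that requires the identity at the point $u+T$, which need not lie in $I$; the most you can extract (by varying $c$, hence $T$, at a fixed $u$) is that $\sigma_k$ is constant on some nonempty open interval. Second, local constancy does not contradict the hypothesis: for $\sigma_k=\mathrm{ReLU}$ the set $\Set{(c,b): ca_i+b<0 \textrm{ and } ca_j+b<0}$ is a nonempty open set on which $D_{ij}\equiv 0$, so the zero sets are genuinely not nowhere dense and the Baire argument collapses. (A suitable $(c,b)$ still exists for ReLU — take $b$ large and positive — but your argument does not produce it.)

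The paper avoids this by a different mechanism: it picks an interval $(\mu_1,\mu_2)$ on which $\sigma_k$ is injective, sets all biases to some $\beta\in(\mu_1,\mu_2)$, and scales the filters by a small $\alpha>0$ so that every pre-activation $\alpha\inner{a^t,x_i^p}+\beta$ lands inside $(\mu_1,\mu_2)$; injectivity then converts distinct pre-activations into distinct outputs, with no case analysis on the shape of $\sigma_k$. You would need this (or some equivalent repair) for the separation step. A secondary issue: as written, your convolutional/fully-connected step separates samples through a single output coordinate, but the lemma's conclusion requires $f_k^p(x_i)\neq f_k^q(x_j)$ for \emph{every} pair of patches $p,q\in[P_k]$, and a patch of layer $k$ that does not contain the distinguished coordinate could still coincide across samples; you need the entry-wise-distinct construction (which you only invoke before max-pooling layers) at every convolutional and fully connected layer, as the paper does.
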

\begin{proof}
    The high-level idea of the proof is the following: since Assumption \ref{ass:different_patches} holds for the training inputs, 
    one can first pick $(W_1,b_1)$ such that one transports the property of Assumption \ref{ass:different_patches} to the
    feature maps of the training data at the next layer and thus to all higher layers by induction.
    
    Since our definition of a convolutional layer includes fully connected layer as a special case,
    it is sufficient to prove the result for the general convolutional structure.
    Since the first layer is a convolutional layer by our assumption,
    we denote by $Q=[a^1,\ldots,a^{T_1}]\in\RR^{l_0\times T_1}$
    a matrix that contains the set of convolutional filters of the first layer.
    Note here that there are $T_1$ filters, namely $\Set{a^1,\ldots,a^{T_1}}$,
    where each filter $a^t\in\RR^{l_0}$ for every $t\in[T_1].$
    Let us define the set
    \begin{align*}
	&S\bydef\Setbar{Q\in\RR^{l_0\times T_1}}{ \mathcal{M}_1(Q) \textrm{ has low rank} } \cup \\
	&\bigcup_{\substack{i\neq j\\p,q\in[P_0]\\t,t'\in[T_1]}}
	\Setbar{Q\in\RR^{l_0\times T_1}}{\inner{a^t,x_i^p}-\inner{a^{t'},x_j^q}=0}.
    \end{align*}
    Basically, $S$ is the set of ``true'' parameter matrices of the first layer where
    the corresponding weight matrix $\mathcal{M}_1(Q)$ has low rank or there exists two patches of two different training samples
    that have the same inner product with some corresponding two filters.
    By  Assumption \ref{ass:different_patches} it holds that $x_i^p\neq x_j^q$ for every $p,q\in[P_0],i\neq j$,
    and thus the right hand side in the above formula of $S$ is just the union of a finite number of hyperplanes which has Lebesgue measure zero.
    For the left hand side, it follows from Lemma \ref{lem:WU_measure_zero} that
    the set of $Q$ for which $\mathcal{M}_1(Q)$  does not have full rank has measure zero.
    Thus the left hand side of $S$ is a set of measure zero. 
    Since $S$ is the union of two  measure zero sets, it has also measure zero,
    and thus the complementary set $\RR^{l_0\times T_1}\setminus S$ must be non-empty and we choose $W_1\in\RR^{l_0\times T_1}\setminus S.$
    
    Since $\sigma_1$ is a continuous and non-constant function, there exists an interval $(\mu_1,\mu_2)$ 
    such that $\sigma_1$ is bijective on $(\mu_1,\mu_2)$.
    We select and fix some matrix $Q=[a^1,\ldots,a^{T_1}]\in\RR^{l_0\times T_1}\setminus S$
    and select some $\beta\in(\mu_1,\mu_2)$.
    Let $\alpha>0$ be a free variable and $W_1=[w_1^1,\ldots,w_1^{T_1}]$ where $w_1^t$ denotes the $t$-th filter of the first layer.
    Let us pick
    \begin{align*}
	w_1^t = \alpha Q_{:t} = \alpha a^t,\quad (b_1)_h=\beta,\quad \forall t\in[T_1], h\in[n_1] .
    \end{align*}
    It follows that $W_1=\alpha Q$ and thus 
    $\mathcal{M}_1(W_1)=\mathcal{M}_1(\alpha Q)=\alpha\mathcal{M}_1(Q)$ as $\mathcal{M}_1$ is a linear map by our definition.
    Since $Q\notin S$ by construction, it holds that $\mathcal{M}_1(W_1)$ has full rank for every $\alpha\neq 0.$
    By Definition \ref{def:conv}, it holds for every $i\in[N],p\in[P_0],t\in[T_1],h=(p-1)T_1 + t$ that
    \begin{align*}
	f_1(x_i)_h=\sigma_1(\inner{w_1^t,x_i^p} + (b_1)_h)=\sigma_1(\alpha\inner{a^t,x_i^p} + \beta) .
    \end{align*}
    Since $\beta\in(\mu_1,\mu_2)$, one can choose a sufficiently small positive $\alpha$ such that
    it holds $\alpha\inner{a,x_i^p} + \beta \in(\mu_1,\mu_2)$ for every $i\in[N],p\in[P_0],t\in[T_1].$ 
    Under this construction, we will show that every entry of $f_1(x_i)$ must be different from every entry of $f_1(x_j)$ for $i\neq j.$
    Indeed, let us compare $f_1(x_i)_h$ and $f_1(x_j)_v$ for some $h=(p-1)T_1+t,v=(q-1)T_1+t'$ and $i\neq j.$
    It holds for sufficient small $\alpha>0$ that
    \begin{align}\label{eq:lem1_lay1}
	&f_1(x_i)_h - f_1(x_j)_v \nonumber \\
	&= \sigma_1\left(\alpha\inner{a^t,x_i^p} + \beta\right) - \sigma_1\left(\alpha\inner{a^{t'},x_j^q} + \beta\right) \nonumber \\
	&\neq 0 
    \end{align}
    where the last inequality follows from three facts.
    First, it holds $\inner{a^t,x_i^p}\neq\inner{a^{t'},x_j^q}$ since $Q\notin S.$
    Second, for the chosen $\alpha$ the values of the arguments of the activation function $\sigma_1$ lie within $(\mu_1,\mu_2).$
    Third, since $\sigma_1$ is bijective on $(\mu_1,\mu_2)$, it maps different inputs to different outputs.
    
    Now, since the entries of $f_1(x_i)$ and that of $f_1(x_j)$ are already pairwise different from each other,
    their corresponding patches must be also different from each other no matter how the patches are organized in the architecture, 
    that is,
    \begin{align*}
	f_1^p(x_i)\neq f_1^q(x_j) \quad\forall p,q\in[P_1], i,j\in[N], i\neq j .
    \end{align*}    
    Now, if the network has only one layer, \ie $L=1$, then we are done.
    Otherwise, we will prove via induction that this property can be translated to any higher layer.
    In particular, suppose that one has already constructed $(W_l,b_l)_{l=1}^k$ so that it holds 
    \begin{align}\label{eq:lem1_layk1}
	f_k(x_i)_h - f_k(x_j)_v \neq 0 \quad\forall h,v\in[n_k], i,j\in[N], i\neq j .
    \end{align}
    This is true for $k=1$ due to \eqref{eq:lem1_lay1}.
    We will show below that \eqref{eq:lem1_layk1} can also hold at layer $k+1$.
    \begin{enumerate}
	\item \underline{Case 1}: Layer $k+1$ is convolutional or fully connected.
	
	Since \eqref{eq:lem1_layk1} holds for $k$ by our induction assumption,
	it must hold that
	\begin{align*}
	    f_k^p(x_i)\neq f_k^q(x_j) \quad\forall p,q\in[P_k], i,j\in[N], i\neq j .
	\end{align*}
	which means Assumption \ref{ass:different_patches} also holds for the set of features at layer $k$.
	Thus one can follows the similar construction as done for layer $1$ above
	by considering the output of layer $k$ as input to layer $k+1$.
	Then one obtains that there exist $(W_{k+1},b_{k+1})$ where $U_{k+1}=\mathcal{M}_{k+1}(W_{k+1})$ has full rank
	so that the similar inequality \eqref{eq:lem1_lay1} now holds for layer $k+1$, 
	which thus implies \eqref{eq:lem1_layk1} holds for $k+1.$
	
	\item \underline{Case 2}: layer $k+1$ is max-pooling
	
	By Definition \ref{def:mp}, it holds $n_{k+1}=P_k$ and one has for every $p\in[P_k]$ 
	\begin{align*}
	    f_{k+1}(x)_p = \max\Big( {(f_k^p(x))}_{1},\ldots,{(f_k^p(x))}_{l_k} \Big) .
	\end{align*}
	Since \eqref{eq:lem1_layk1} holds at layer $k$ by our induction assumption,
	every entry of every patch of $f_k(x_i)$ must be different from every entry of every patch of $f_k(x_j)$ for every $i\neq j$,
	that is, ${(f_k^p(x_i))}_{r} \neq {(f_k^q(x_j))}_{s}$ for every $r,s\in[l_k],p,q\in[P_k],i\neq j.$
	Therefore, their maximum elements cannot be the same, that is,
	\begin{align*}
	    f_{k+1}(x_i)_p \neq f_{k+1}(x_j)_q 
	\end{align*}
	for every $p,q\in[n_{k+1}], i,j\in[N], i\neq j,$
	which proves \eqref{eq:lem1_layk1} for layer $k+1.$
    \end{enumerate}
    So far, we have proved that \eqref{eq:lem1_layk1} holds for every $1\leq k\leq L$.
    Thus it follows that for every layer $k$, 
    there exists a set of parameters of the first $k$ layers
    for which the patches at layer $k$  of different training samples
    are pairwise different from each other, that is,
    $f_k^p(x_i)\neq f_k^q(x_j)$ for every $p,q\in[P_k],i\neq j.$
    Moreover, except for max-pooling layers, all the weight matrices up to layer $k$ have been chosen to have full rank.
\end{proof}
\paragraph{Proof of Theorem \ref{theo:linear_independence}}
    Let $A=F_k=[f_{k}(x_1)^T,\ldots,f_{k}(x_N)^T]\in\RR^{N\times n_k}.$
    Since our definition of a convolutional layer includes fully connected layer as a special case,
    it is sufficient to prove the result for convolutional structure.
    By Theorem's assumption, layer $k$ is convolutional and thus it holds by Definition \ref{def:conv} that
    $$A_{ij} = {f_k(x_i)}_j = \sigma\Big( \inner{w_k^t,f_{k-1}^p(x_i)} + (b_k)_j \Big)$$
    for every $i\in[N],t\in[T_k],p\in[P_{k-1}]$ and $j=(p,t)\bydef(p-1)T_k+t\in[n_k].$
    
    In the following, we show that there exists a set of parameters of the network
    such that $\rank(A)=N$ and all the weight matrices $U_l=\mathcal{M}_l(W_l)$ have full rank.

    	    First, one observes that the subnetwork consisting of all the layers from the input layer till layer $k-1$ 
    	    satisfies the conditions of Lemma \ref{lem:transport_different_patches}.
	    Thus by applying Lemma \ref{lem:transport_different_patches} to this subnetwork,
	    one obtains that there exist $(W_l,b_l)_{l=1}^{k-1}$
	    for which all the matrices $(U_l)_{l=1}^{k-1}$, except for max-pooling layers, have full rank
	    and it holds that $f_{k-1}^p(x_i)\neq f_{k-1}^q(x_j)$ for every $p,q\in[P_{k-1}],i\neq j.$
	    The main idea now is to fix the parameters of these layers and 
	    pick $(W_k,b_k)$ such that $U_k=\mathcal{M}_k(W_k)$ has full rank and it holds $\rank(A)=N$.
	    Let us define the set
	    \begin{align*}
		S\!=\!\!\!\!\bigcup_{\substack{i\neq j\\p\in[P_{k-1}]}}
		\Setbar{a\in\RR^{l_{k-1}}}{\inner{a,f_{k-1}^p(x_i)-f_{k-1}^p(x_j)}=0} .
	    \end{align*}
	    From the above construction, it holds that $f_{k-1}^p(x_i)\neq f_{k-1}^p(x_j)$ for every $p\in[P_{k-1}],i\neq j$, 
	    and thus $S$ is the union of a finite number of hyperplanes which thus has measure zero.
	    Let us denote by $Q=[a^1,\ldots,a^{T_k}]\in\RR^{l_{k-1}\times T_k}$ a parameter matrix
	    that contains all the convolutional filters of layer $k$ in its columns.
	    Pick $a^t\in\RR^{l_{k-1}}\setminus S$ for every $t\in[T_k]$,
	    so that it holds that $U_k=\mathcal{M}_k(Q)$ has full rank.
	    Note here that such matrix $Q$ always exists. 
	    Indeed, $Q$ is chosen from a positive measure set as its columns (\ie $a^t$) are picked from a positive measure set.
	    Moreover, the set of matrices $Q$ for which $\mathcal{M}_k(Q)$ has low rank has just measure zero due to Lemma \ref{lem:WU_measure_zero}.
	    Thus there always exists at least one matrix $Q$ so that all of its columns do not belong to $S$
	    and that $\mathcal{M}_k(Q)$ has full rank.
	    In the rest of the proof, the value of matrix $Q$ is fixed.
	    Let $\alpha\in\RR$ be a free parameter.
	    Since $\sigma_k$ is a continuous and non-constant function, there exist a $\beta\in\RR$ 
	    such that $\sigma_k(\beta)\neq 0.$
	    Let the value of $\beta$ be fixed as well.
	    We construct the convolutional filters $W_k=[w_k^1,\ldots,w_k^{T_k}]$ and the biases $b_k\in\RR^{n_k}$ of layer $k$ as follows.
	    For every $p\in[P_{k-1}],t\in[T_k],j=(p,t)$, we define
	    \begin{align*}
		&w_k^t=- \alpha a^t,\\ 
		&(b_k)_j=\alpha\inner{a^t,f_{k-1}^p(x_j)} + \beta .
	    \end{align*}
	    It follows that $W_k=-\alpha Q$ and thus $U_k=\mathcal{M}_k(W_k)=-\alpha\mathcal{M}_k(Q)$
	    as $\mathcal{M}_k$ is a linear map.
	    Moreover, since $\mathcal{M}_k(Q)$ has full rank by construction, 
	    it holds that $U_k$ has full rank for every $\alpha\neq 0.$
	    As $\alpha$ varies,  we get a family of matrices $A(\alpha)\in\RR^{N\times n_k}$ where
	    it holds for every $i\in[N],j=(p,t)\in[n_k]$ that
	    \begin{align}\label{eq:Aij}
		  A(\alpha)_{ij} 
		  &= \sigma_k\Big( \inner{w_k^t,f_{k-1}^p(x_i)} + (b_k)_j \Big) \nonumber \\
		  &= \sigma_k\Big( \alpha \inner{a^t,f_{k-1}^p(x_j) - f_{k-1}^p(x_i)} + \beta \Big) .
	    \end{align} 
	    Note that each row of $A(\alpha)$ corresponds to one training sample and that
	    permutations of the rows of $A(\alpha)$ do not change the rank of $A(\alpha)$.
	    We construct a permutation $\gamma$ of $\Set{1,2,\ldots,N}$ as follows.
	    For every $j=1,2,\ldots,N$, let $(p,t)$ be the tuple determined by $j$ (the inverse transformation for given $j \in [n_k]$ is $p=\left\lceil \frac{j}{T_k}\right\rceil$ and $t=j -\Big(\left\lceil \frac{j}{T_k}\right\rceil -1\Big)T_k$) and define
		\begin{align*}
		    \gamma(j) = \argmin_{i\in\Set{1,2,\ldots,N}
		    \,\setminus\,\Set{\gamma(1),\ldots,\gamma(j-1)}} 
		    \inner{a^t, f_{k-1}^p(x_i)} .
		\end{align*}
	    Note that $\gamma(j)$ is unique for every $1\leq j\leq N$ since $a^t\notin S.$ 
	    It is clear that $\gamma$ constructed as above is a permutation of $\Set{1,2,\ldots,N}$
	    since every time a different element is taken from the index set $[N]$.
	    From the definition of $\gamma(j)$, it holds that for every $j=(p,t)\in[N],i\in[N],i>j$ that
	    \begin{align*}
		\inner{a^t, f_{k-1}^p(x_{\gamma(j)})} < \inner{a^t, f_{k-1}^p(x_{\gamma(i)})} .
	    \end{align*}
            We can relabel the training data according to the permutation so that w.l.o.g we can assume that $\gamma$ is the identity permutation, 
            that is, $\gamma(j)=j$ for every $j\in[N],$
	    in which case it holds for every $j=(p,t)\in[N],i\in[N],i>j$ that
	    \begin{align}\label{eq:af}
		\inner{a^t, f_{k-1}^p(x_j)} < \inner{a^t, f_{k-1}^p(x_i)} .
	    \end{align}
	    Under the above construction of $(W_l,b_l)_{l=1}^{k}$, 
	    we are ready to show that there exist $\alpha$ for which $\rank(A(\alpha))=N.$
	    Since $\sigma_k$ satisfies Assumption \ref{ass:activation}, we consider the following cases.
	    \begin{enumerate}
		\item \underline{Case 1}: 
		There are finite constants $\mu_{+},\mu_{-}\in\RR$ s.t. 
		$\lim\limits_{t\to-\infty} \sigma_k(t)=\mu_{-}$ and $\lim\limits_{t\to\infty} \sigma_k(t)=\mu_{+}$  
		and $\mu_{+} \mu_{-}=0.$ \\
		Let us consider the first case where $\mu_{-}=0.$
		From \eqref{eq:Aij} and \eqref{eq:af} one obtains
		\begin{align}\label{eq:A_structure}
		    \lim\limits_{\alpha\to+\infty} A(\alpha)_{ij} = 
		    \begin{cases}
			\sigma_k(\beta) & j=i \\ 
			\mu_{-}=0 & i>j \\ 
			\eta_{ij} & i<j 
		    \end{cases}
		\end{align}
		where $\eta_{ij}$ is given for every $i<j$ where $j=(p,t)$ as
		\begin{align*}
		    \eta_{ij}=
		    \begin{cases}
			\mu_{-}, & \inner{a^t, f_{k-1}^p(x_j)-f_{k-1}^p(x_i)} < 0\\ 
			\mu_{+}, & \inner{a^t, f_{k-1}^p(x_j)-f_{k-1}^p(x_i)} > 0 
		    \end{cases}
		\end{align*}
		Note that $\eta_{ij}$ cannot be zero for $i\neq j$ because $a^t\notin S.$
		In the following, let us denote $A(\alpha)_{1:N,1:N}$ as a sub-matrix of $A(\alpha)$
		that consists of the first $N$ rows and columns.
		By the Leibniz-formula one has
		\begin{align}\label{eq:leibniz}
		    &\det(A(\alpha)_{1:N,1:N}) \nonumber \\
		    &=\sigma_k(\beta)^{N} + \sum_{\pi \in S_N \backslash \{\gamma\} } 
		    \mathrm{sign}(\pi) \prod_{j=1}^{N} A(\alpha)_{\pi(j) j}
		  \end{align}
		where $S_N$ is the set of all $N!$ permutations of the set $\{1,\ldots,N\}$
		and $\gamma$ is the identity permutation.
		Now, one observes that for every permutation $\pi\neq \gamma$,
		there always exists at least one component $j$ where $\pi(j)>j$ 
		in which case it follows from \eqref{eq:A_structure} that
		\begin{align*}
		    \lim\limits_{\alpha\to\infty} \prod_{j=1}^{N} A(\alpha)_{\pi(j) j} = 0 .
		\end{align*}
		Since there are only finitely many such terms in \eqref{eq:leibniz}, one obtains
		\begin{align*}
		    \lim\limits_{\alpha\to\infty} \det(A(\alpha)_{1:N,1:N}) = \sigma_k(\beta)^N \neq 0
		\end{align*}
		where the last inequality follows from our choice of $\beta$.
		Since $\det(A(\alpha)_{1:N,1:N})$ is a continuous function of $\alpha$,
		there exists $\alpha_0\in\RR$ such that for every $\alpha\geq\alpha_0$ 
		it holds $\det(A(\alpha)_{1:N,1:N})\neq 0$ and thus $\rank(A(\alpha)_{1:N,1:N})=N$ 
		which further implies $\rank(A(\alpha))=N.$
		Thus the corresponding set of feature vectors 
		$\Set{f_k(x_1),\ldots,f_k(x_N)}$ are linearly independent.
		
		For the case where $\mu_{+}=0$, one can argue similarly.
		The only difference is that one considers now the limit for $\alpha\to-\infty.$
		In particular, \eqref{eq:Aij} and \eqref{eq:af} lead to
		\begin{align*}
		    \lim\limits_{\alpha\to-\infty} A(\alpha)_{ij} = 
		    \begin{cases}
			\sigma_k(\beta) & i=j\\
			\mu_{+}=0 & i>j \\
			\eta_{ij}=0 & i<j .
		    \end{cases}
		\end{align*}
		For every permutation $\pi\neq \gamma$ there always exists at least one component $j$ where $\pi(j)>j$,
		in which case it holds that
		\begin{align*}
		    \lim\limits_{\alpha\to-\infty} \prod_{j=1}^{N} A(\alpha)_{\pi(j) j} = 0 .
		\end{align*}
		and thus it follows from the Leibniz formula that
		\begin{align*}
		    \lim\limits_{\alpha\to-\infty} \det(A(\alpha)_{1:N,1:N}) = \sigma_k(\beta)^N \neq 0 .
		\end{align*}
		Since $\det(A(\alpha)_{1:N,1:N})$ is a continuous function of $\alpha$,
		there exists $\alpha_0\in\RR$ such that for every $\alpha\leq\alpha_0$ 
		it holds $\det(A(\alpha)_{1:N,1:N})\neq 0$ and thus $\rank(A(\alpha)_{1:N,1:N})=N$ 
		which further implies $\rank(A(\alpha))=N.$
		Thus the set of feature vectors at layer $k$ are linearly independent.

		\item \underline{Case 2}: There are positive constants $\rho_1,\rho_2,\rho_3,\rho_4$ 
		s.t. $|\sigma_k(t)|\leq \rho_1 e^{\rho_2 t}$ for $t< 0$ 
		and $|\sigma_k(t)|\leq \rho_3 t + \rho_4 $ for $t\geq 0.$\\
		Our proof strategy is essentially similar to the previous case.
		Indeed, for every permutation $\pi\neq \gamma$ 
		there always exist at least one component $j=(p,t)\in[N]$ where $\pi(j)>j$ in which case 
		$\delta_j\bydef\inner{a^t,f_{k-1}^p(x_j) - f_{k-1}^p(x_{\pi(j)})} < 0$
		due to \eqref{eq:af}.
		For sufficiently large $\alpha>0$, it holds that $\alpha \delta_j + \beta<0$ 
		and thus one obtains from \eqref{eq:Aij} that
		\begin{align*}
		    |A(\alpha)_{\pi(j) j}| = 
		    |\sigma_k(\alpha\delta_j + \beta)| 
		    \leq \rho_1 e^{\rho_2\beta} e^{-\alpha \rho_2 |\delta_j|} .
		\end{align*}
		If $\pi(j)=j$ then $|A(\alpha)_{\pi(j) j}|=|\sigma_k(\beta)|$ which is a constant.
		For $\pi(j)<j$, one notices that 
		$\delta_j\bydef\inner{a^t,f_{k-1}^p(x_j) - f_{k-1}^p(x_{\pi(j)})}$
		can only be either positive or negative as $a^t\notin S.$
		In this case, if $\delta_j<0$ then it can be bounded by the similar exponential term as above
		for sufficiently large $\alpha$.
		Otherwise it holds $\alpha \delta_j + \beta>0$ 
		for sufficiently large $\alpha>0$ and we get
		\begin{align*} 
		    |A(\alpha)_{\pi(j) j}|
		    =|\sigma(\alpha \delta_j + \beta)| 
		    \leq \rho_3 \delta_j \alpha + \rho_3 \beta +\rho_4 .
		\end{align*}
		Overall, for sufficiently large $\alpha>0$, 
		there must exist positive constants $P,Q,R,S,T$ 
		such that it holds for every $\pi\in S_N\setminus\Set{\gamma}$ that		
		\[  \Big|\prod_{j=1}^{N} A(\alpha)_{\pi(j) j}\Big| \; \leq \; R (P \alpha + Q)^{S} e^{-\alpha T}.\]
		The upper bound goes to zero as $\alpha$ goes to $\infty.$ 
		This combined with the Leibniz formula from \eqref{eq:leibniz}, we get
		$\lim_{\alpha \rightarrow \infty}  \det(A(\alpha)_{1:N,1:N}) = \sigma_k(\beta)^{N} \neq 0.$
		Since $\det(A(\alpha)_{1:N,1:N})$ is a continuous function of $\alpha$,
		there exists $\alpha_0\in\RR$ such that for every $\alpha\geq\alpha_0$ 
		it holds $\det(A(\alpha)_{1:N,1:N})\neq 0$ and thus $\rank(A(\alpha)_{1:N,1:N})=N$ 
		which implies $\rank(A(\alpha))=N.$
		Thus the set of feature vectors at layer $k$ are linearly independent.
	    \end{enumerate}
    Overall, we have shown that there always exist $(W_l,b_l)_{l=1}^k$ 
    such that the set of feature vectors $\Set{f_k(x_1),\ldots,f_k(x_N)}$ at layer $k$ are linearly independent.
    Moreover, $(W_l,b_l)_{l=1}^k$ have been chosen so that all the weight matrices $U_l=\mathcal{M}_l(W_l)$,
    except for max-pooling layers, have full rank for every $1\leq l\leq k.$

\section{Proof of Theorem \ref{theo:how_often}}
    Any linear function is real analytic and 
    the set of real analytic functions is closed under addition, multiplication and composition, 
    see e.g. Prop. 2.2.2 and Prop. 2.2.8 in \cite{KraPar2002}.
    As we assume that all the activation functions of the first $k$ layers are real analytic, 
    we get that the function $f_k$ 
    is a real analytic function of $(W_l,b_l)_{l=1}^k$ as it is the composition of real analytic functions. 
    Now, we recall from our definition that $F_k=[f_k(x_1),\ldots,f_k(x_N)]^T\in\RR^{N\times n_k}$
    is the output matrix at layer $k$ for all training samples.
    One observes that the set of low rank matrices $F_k$ can be characterized by a system of equations 
    such that all the $\binom{n_k}{N}$ determinants of all $N \times N$ sub-matrices of $F_k$ are zero. 
    As the determinant is a polynomial in the entries of the matrix
    and thus an analytic function of the entries and composition of analytic functions are again analytic, 
    we conclude that each determinant is an analytic function of the network parameters of the first $k$ layers. 
    By Theorem \ref{theo:linear_independence}, 
    there exists at least one set of parameters of the first $k$ layers 
    such that one of these determinant functions
    is not identically zero and thus by Lemma \ref{lem:zeros_of_analytic},
    the set of network parameters where this determinant is zero has Lebesgue measure zero.
    But as all submatrices need to have low rank in order that $\rank(F_k)<N$, 
    it follows that the set of parameters where $\rank(F_k)<N$ has Lebesgue measure zero. 

\section{Proof of Corollary \ref{cor:finite_expressivity}}
    Since the network satisfies the conditions of Theorem \ref{theo:linear_independence} for $k=L-1$,
    there exists a set of parameters $(W_l,b_l)_{l=1}^{L-1}$ such that $\rank(F_{L-1})=N.$
    Let $F_L=[f_L(x_1),\ldots,f_L(x_N)]^T\in\RR^{N}$ then it follows that $F_L=F_{L-1}\lambda.$
    Pick $\lambda = F_{L-1}^T(F_{L-1}F_{L-1}^T)^{-1}y$
    then it holds $F_L = F_{L-1} \lambda=y.$

\section{Proof of Lemma \ref{lem:S_k}}
    One can see that 
    \begin{align*}
	&\mathcal{P}\setminus S_k\subseteq \Setbar{(W_l,b_l)_{l=1}^L}{\rank(F_k) < N} \cup \\
	&\Setbar{(W_l,b_l)_{l=1}^L}{ U_l \textrm{ has low rank for some layer } l }.
    \end{align*}
    By Theorem \ref{theo:how_often}, 
    it holds that the set $\Setbar{(W_l,b_l)_{l=1}^L}{\rank(F_k) < N}$ has Lebesgue measure zero.
    Moreover, it follows from Lemma \ref{lem:WU_measure_zero} that 
    the set $\Setbar{(W_l,b_l)_{l=1}^L}{ U_l \textrm{ has low rank for some layer } l }$ also has measure zero.
    Thus, $\mathcal{P}\setminus S_k$ has Lebesgue measure zero.

\section{Proof of Lemma \ref{lem:bounds}}
To prove Lemma \ref{lem:bounds}, we first derive standard backpropagation in Lemma \ref{lem:grad}.
In the following we use the Hadamard product $\circ$, 
which for $A,B \in \RR^{m \times n}$ is defined as $A \circ B \in \RR^{m \times n}$ with $(A \circ B)_{ij}=A_{ij}B_{ij}$.
Let $\delta_{kj}(x_i)=\frac{\partial\Phi}{\partial g_{kj}(x_i)}$
be the derivative of $\Phi$ w.r.t. the value of unit $j$ at layer $k$ evaluated at a single sample $x_i$.
We arrange these vectors for all training samples into a single matrix
$\Delta_k=[\delta_{k:}(x_1), \ldots, \delta_{k:}(x_N)]^T\in\RR^{N\times n_k}.$ The following lemma is
a slight modification of Lemma 2.1 in \citep{Quynh2017} for which we provide the proof for completeness.
\begin{lemma}\label{lem:grad}
    Given some hidden layer $1\leq k\leq L-1$. 
    Let $(\sigma_{k+1},\ldots,\sigma_{L-1})$ be differentiable.
    Then the following hold:
    \begin{enumerate}
	\item $\Delta_l=
	    \begin{cases}
		F_L-Y, &l=L\\
		(\Delta_{l+1}U_{l+1}^T) \circ \sigma_l'(G_{l}), &k+1\leq l\leq L-1
	    \end{cases}$
	\item $\nabla_{U_l}\Phi=F_{l-1}^T \Delta_l,\,\forall\, k+1\leq l\leq L$
    \end{enumerate}
\end{lemma}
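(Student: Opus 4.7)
The plan is to derive both parts of the lemma by a careful application of the chain rule, mirroring the standard backpropagation derivation. The statement is essentially bookkeeping: there is no deep obstacle, only the need to keep indices and the distinction between pre-activations $g_l$ and post-activations $f_l$ straight, and to verify that the per-sample chain rule, once stacked across the $N$ samples, produces the Hadamard-product form $(\Delta_{l+1}U_{l+1}^T)\circ \sigma_l'(G_l)$ rather than an ordinary matrix product.

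First I would handle the base case $l=L$. Since the output layer carries no nonlinearity, $F_L = G_L$, so differentiating $\Phi=\tfrac{1}{2}\|F_L-Y\|_F^2$ entrywise with respect to $g_{Lj}(x_i)$ gives $\delta_{Lj}(x_i) = (F_L-Y)_{ij}$; stacking the per-sample gradients as rows yields $\Delta_L = F_L - Y$.

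Next I would prove the backward recursion for $k+1 \le l \le L-1$. Fix a single training sample $x_i$. The pre-activation at layer $l+1$ is $g_{l+1,s}(x_i) = \sum_r f_{lr}(x_i)(U_{l+1})_{rs} + (b_{l+1})_s$, and $f_{lr}(x_i) = \sigma_l(g_{lr}(x_i))$ depends only on $g_{lr}(x_i)$ via a scalar nonlinearity. Applying the chain rule:
\begin{align*}
\delta_{lj}(x_i) &= \sum_{s=1}^{n_{l+1}} \frac{\partial \Phi}{\partial g_{l+1,s}(x_i)}\cdot \frac{\partial g_{l+1,s}(x_i)}{\partial f_{lj}(x_i)} \cdot \sigma_l'(g_{lj}(x_i)) \\
&= \Big(\sum_{s} \delta_{l+1,s}(x_i)\,(U_{l+1})_{js}\Big)\,\sigma_l'(g_{lj}(x_i)).
\end{align*}
The parenthesised sum is the $j$-th entry of the row vector $\delta_{l+1,:}(x_i)\,U_{l+1}^T$, and the factor $\sigma_l'(g_{lj}(x_i))$ acts componentwise. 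Stacking rows across $i=1,\ldots,N$ and recognising the componentwise multiplication as a Hadamard product gives $\Delta_l = (\Delta_{l+1}U_{l+1}^T)\circ \sigma_l'(G_l)$.

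Finally I would prove the gradient formula $\nabla_{U_l}\Phi = F_{l-1}^T \Delta_l$. Since $g_{ls}(x_i) = \sum_r f_{l-1,r}(x_i)(U_l)_{rs} + (b_l)_s$, the partial $\partial g_{ls}(x_i)/\partial (U_l)_{rs} = f_{l-1,r}(x_i)$, and $(U_l)_{rs}$ enters the loss only through $g_{ls}(x_i)$ summed over $i$. Hence
\begin{align*}
\frac{\partial \Phi}{\partial (U_l)_{rs}}
= \sum_{i=1}^{N} \delta_{ls}(x_i)\, f_{l-1,r}(x_i)
= (F_{l-1}^T \Delta_l)_{rs},
\end{align*}
which is the claimed matrix identity. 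The only care required is to check that $U_l$ is treated as a free variable here, which is legitimate because, by the remarks preceding the lemma, $\Phi$ is being regarded as a function of $(U_l,b_l)_{l=1}^L$; the pull-back to the true convolutional parameters $W_l$ is handled separately through $\mathcal{M}_l$ and plays no role in this computation.
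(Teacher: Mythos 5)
Your proposal is correct and follows essentially the same route as the paper's proof: the same per-sample chain-rule computation for the base case $\Delta_L=F_L-Y$, the backward recursion yielding the Hadamard-product form, and the entrywise derivation of $\nabla_{U_l}\Phi=F_{l-1}^T\Delta_l$. The closing remark about treating $U_l$ as the free variable matches the paper's discussion preceding the lemma, so nothing is missing.
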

\begin{proof}
\begin{enumerate}
	\item By definition, it holds for every $i\in[N],j\in[n_L]$ that
	\begin{align*}
	    (\Delta_L)_{ij}&=\delta_{Lj}(x_i)
	    =\frac{\partial\Phi}{\partial g_{Lj}(x_i)} 
	    =f_{Lj}(x_i)-y_{ij}
	\end{align*}
	and hence, $\Delta_L=F_L-Y.$
	
	For every $k+1\leq l\leq L-1$, the output of the network for a single training sample can be written as 
	the composition of differentiable functions (\ie the outputs of all layers from $l+1$ till the output layer),
	and thus the chain rule yields for every $i\in[N],j\in[n_l]$ that
	\begin{align*}
	    (\Delta_l)_{ij} 
	    &=\delta_{lj}(x_i) \\
	    &=\frac{\partial\Phi}{\partial g_{lj}(x_i)} \\
	    &=\sum_{s=1}^{n_{l+1}} \frac{\partial\Phi}{\partial g_{(l+1)s}(x_i)} \frac{\partial g_{(l+1)s}(x_i)}{\partial f_{lj}(x_i)}  \frac{\partial f_{lj}(x_i)}{\partial g_{lj}(x_i)}  \\
	    &=\sum_{s=1}^{n_{l+1}} \delta_{(l+1)s} (x_i) {(U_{l+1})}_{js} \sigma'(g_{lj}(x_i)) \\
	    &=\sum_{s=1}^{n_{l+1}} {(\Delta_{(l+1)})}_{is} {(U_{l+1})}_{sj}^T \sigma'({(G_l)}_{ij}) 
	\end{align*}
	and hence $\Delta_l=(\Delta_{l+1}U_{l+1}^T) \circ \sigma'(G_{l}).$
	
	\item For every $k+1\leq l\leq L-1,r\in[n_{l-1}],s\in[n_l],$ one has
	\begin{align*}
	    \frac{\partial\Phi}{\partial (U_l)_{rs}} 
	    &= \sum_{i=1}^N \frac{\partial\Phi}{\partial g_{ls}(x_i)} \frac{\partial g_{ls}(x_i)}{\partial (U_l)_{rs}} \\
	    &=\sum_{i=1}^N \delta_{ls}(x_i) f_{(l-1)r}(x_i) \\
	    &=\sum_{i=1}^N (F_{l-1}^T)_{ri} {(\Delta_l)}_{is} \\
	    &=\big( F_{l-1}^T \Delta_l\big)_{rs}
	\end{align*}
	and hence $\nabla_{U_l}\Phi={F_{l-1}^T}\Delta_l.$
\end{enumerate} 
\end{proof}
The following straightforward inequalities are also helpful to prove Lemma \ref{lem:bounds}.
Let $\evmin{\cdot}$ and $\evmax{\cdot}$ denotes the smallest and largest eigenvalue of a matrix.
\begin{lemma}\label{lem:Ax_2}
    Let $A\in\RR^{m\times n}$ with $m\geq n$.
    Then it holds $\svmax{A}\norm{x}_2\geq\norm{Ax}_2\geq\svmin{A}\norm{x}_2$ for every $x\in\RR^n.$
\end{lemma}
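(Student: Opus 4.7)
The plan is to prove the two-sided inequality directly from the singular value decomposition (SVD) of $A$, which is the canonical tool for relating $\norm{Ax}_2$ to extremal singular values. Since $A\in\RR^{m\times n}$ with $m\geq n$, I would invoke the thin SVD to write $A=U\Sigma V^T$, where $U\in\RR^{m\times n}$ has orthonormal columns, $V\in\RR^{n\times n}$ is orthogonal, and $\Sigma=\mathrm{diag}(\sigma_1,\ldots,\sigma_n)$ with $\sigma_1\geq\sigma_2\geq\ldots\geq\sigma_n\geq 0$, so that $\svmax{A}=\sigma_1$ and $\svmin{A}=\sigma_n$.

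Next, I would compute $\norm{Ax}_2^2$ directly. Setting $y\bydef V^Tx\in\RR^n$, orthogonality of $V$ gives $\norm{y}_2=\norm{x}_2$, and orthonormality of the columns of $U$ gives $\norm{U\Sigma y}_2^2=\norm{\Sigma y}_2^2$. Hence
\begin{align*}
    \norm{Ax}_2^2 = \norm{\Sigma y}_2^2 = \sum_{i=1}^{n} \sigma_i^2 y_i^2.
\end{align*}
Bounding each $\sigma_i^2$ between $\sigma_n^2$ and $\sigma_1^2$, I get
\begin{align*}
    \sigma_n^2 \norm{y}_2^2 \;\leq\; \sum_{i=1}^{n} \sigma_i^2 y_i^2 \;\leq\; \sigma_1^2 \norm{y}_2^2,
\end{align*}
and substituting $\norm{y}_2=\norm{x}_2$ and taking square roots yields $\svmin{A}\norm{x}_2\leq\norm{Ax}_2\leq\svmax{A}\norm{x}_2$, as required.

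There is essentially no obstacle here: the assumption $m\geq n$ is exactly what guarantees that the thin SVD has $n$ (possibly zero) singular values, so both $\svmin{A}$ and $\svmax{A}$ are well defined as the smallest and largest of the $n$ diagonal entries of $\Sigma$. The only minor point worth noting is that the inequality $\svmin{A}\norm{x}_2\leq\norm{Ax}_2$ is nontrivial only when $A$ has full column rank (otherwise $\svmin{A}=0$ and the bound is vacuous); in that case $A$ has a nonzero null vector $x$ and the inequality is tight. The upper bound is tight whenever $x$ is a right singular vector associated with $\sigma_1$. The entire argument is a few lines and requires no machinery beyond the SVD.
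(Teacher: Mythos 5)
Your proof is correct, but it takes a slightly different route from the paper's. The paper does not construct a decomposition at all: it uses the variational (Rayleigh quotient) characterization, writing $\svmin{A}=\sqrt{\evmin{A^TA}}=\min_{x\neq 0}\frac{\norm{Ax}_2}{\norm{x}_2}$ and $\svmax{A}=\sqrt{\evmax{A^TA}}=\max_{x\neq 0}\frac{\norm{Ax}_2}{\norm{x}_2}$, from which both inequalities are immediate; the hypothesis $m\geq n$ enters only to guarantee that $\svmin{A}$ is indeed the square root of the \emph{smallest} eigenvalue of $A^TA$ (all $n$ singular values of $A$ arise from $A^TA$). You instead invoke the thin SVD $A=U\Sigma V^T$, change variables to $y=V^Tx$, and bound the diagonal quadratic form $\sum_i \sigma_i^2 y_i^2$ termwise. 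The paper's argument is shorter and presupposes only the eigenvalue characterization of singular values, while yours is more constructive: it makes the extremal cases explicit (tightness at the top and bottom right singular vectors, and the vacuity of the lower bound when $A$ is column-rank-deficient), at the cost of invoking existence of the SVD, which is a strictly stronger tool than the Rayleigh quotient identity. Since the SVD is itself usually proved via the spectral theorem for $A^TA$, the two arguments are ultimately cousins, and both correctly isolate where $m\geq n$ is used.
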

\begin{proof}
    Since $m\geq n$, it holds that 
    $\svmin{A}=\sqrt{\evmin{A^TA}}=\sqrt{\min\frac{x^TA^TAx}{x^Tx}}=\min\frac{\norm{Ax}_2}{\norm{x}_2}$
    and thus $\svmin{A}\leq\frac{\norm{Ax}_2}{\norm{x}_2}$ for every $x\in\RR^n.$
    Similarly, it holds
    $\svmax{A}=\sqrt{\evmax{A^TA}}=\sqrt{\max\frac{x^TA^TAx}{x^Tx}}=\max\frac{\norm{Ax}_2}{\norm{x}_2}$
    and thus $\svmax{A}\geq\frac{\norm{Ax}_2}{\norm{x}_2}$ for every $x\in\RR^n.$
\end{proof}
\begin{lemma}\label{lem:AB_F}
    Let $A\in\RR^{m\times n}, B\in\RR^{n\times p}$ with $m\geq n$.
    Then it holds $\svmax{A}\Fnorm{B}\geq\Fnorm{AB}\geq\svmin{A}\Fnorm{B}.$
\end{lemma}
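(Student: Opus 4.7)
The plan is to reduce Lemma \ref{lem:AB_F} directly to the column-wise bound established in Lemma \ref{lem:Ax_2}. The key observation is that the squared Frobenius norm of a matrix equals the sum of the squared Euclidean norms of its columns, and matrix multiplication acts column-wise, so the problem factors cleanly.

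First I would write $B=[B_{:1},\ldots,B_{:p}]$ with $B_{:j}\in\RR^n$, so that $AB=[AB_{:1},\ldots,AB_{:p}]$ with $AB_{:j}\in\RR^m$. Then by definition of the Frobenius norm,
\begin{align*}
\Fnorm{AB}^2 = \sum_{j=1}^{p} \norm{AB_{:j}}_2^2, \qquad \Fnorm{B}^2 = \sum_{j=1}^{p} \norm{B_{:j}}_2^2.
\end{align*}
Since $m\geq n$, Lemma \ref{lem:Ax_2} applies to each column $B_{:j}\in\RR^n$, giving the two-sided bound
\begin{align*}
\svmin{A}^2 \norm{B_{:j}}_2^2 \;\leq\; \norm{AB_{:j}}_2^2 \;\leq\; \svmax{A}^2 \norm{B_{:j}}_2^2.
\end{align*}

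Summing these inequalities over $j\in[p]$ yields $\svmin{A}^2 \Fnorm{B}^2 \leq \Fnorm{AB}^2 \leq \svmax{A}^2 \Fnorm{B}^2$, and taking square roots delivers the claim. There is no real obstacle here: the only ingredient beyond the column decomposition is Lemma \ref{lem:Ax_2}, and the hypothesis $m\geq n$ is exactly what guarantees that $\svmin{A}$ is the relevant quantity (otherwise $A$ would necessarily have a non-trivial kernel and the lower bound would be trivial). The entire argument is thus a short calculation, and the lemma should be viewed simply as the matrix-valued Frobenius analog of Lemma \ref{lem:Ax_2}, tailored for use in bounding $\Fnorm{\nabla_{U_{k+1}}\Phi}$ via the backpropagation identity $\nabla_{U_l}\Phi=F_{l-1}^T\Delta_l$ from Lemma \ref{lem:grad}.
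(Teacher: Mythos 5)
Your proof is correct and is essentially the argument the paper gives: the paper writes $\Fnorm{AB}^2=\tr(B^TA^TAB)$ and bounds it by $\evmin{A^TA}\tr(B^TB)$ and $\evmax{A^TA}\tr(B^TB)$, which is exactly your column-wise bound from Lemma \ref{lem:Ax_2} summed over $j$, since $\tr(B^TA^TAB)=\sum_j \norm{AB_{:j}}_2^2$. The only cosmetic difference is that you route the estimate explicitly through Lemma \ref{lem:Ax_2} on each column, while the paper invokes the corresponding trace/eigenvalue inequality directly.
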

\begin{proof}
    Since $m\geq n$, it holds that $\evmin{A^TA}=\svmin{A}^2$ and $\evmax{A^TA}=\svmax{A}^2.$
    Thus we have $\Fnorm{AB}^2=\tr(B^TA^TAB)\geq\evmin{A^TA}\tr(B^T B)=\svmin{A}^2\Fnorm{B}^2.$
    Similarly, it holds $\Fnorm{AB}^2=\tr(B^TA^TAB)\leq\evmax{A^TA}\tr(B^T B)=\svmax{A}^2\Fnorm{B}^2.$
\end{proof}

\paragraph{Proof of Lemma \ref{lem:bounds}}
    We first prove the lower bound.
    Let $\Id_{m}$ denotes an $m$-by-$m$ identity matrix and $\otimes$ the Kronecker product.
    From Lemma \ref{lem:grad} it holds $\nabla_{U_{k+1}}\Phi=F_L^T\Delta_{k+1}$ and thus 
    \begin{align*}
	\vec(\nabla_{U_{k+1}}\Phi)=(\Id_{n_{k+1}}\otimes F_k^T)\vec(\Delta_{k+1}) .
    \end{align*}
    It follows that
    \begin{align}\label{eq:lowerbound_1}
	\Fnorm{\nabla_{U_{k+1}}\Phi} 
	&=  \norm{(\Id_{n_{k+1}}\otimes F_k^T) \vec(\Delta_{k+1})}_2 \nonumber \\
	&\geq  \svmin{F_k} \norm{\vec(\Delta_{k+1})}_2 \nonumber \\
	&=  \svmin{F_k} \norm{\Delta_{k+1}}_F 
    \end{align}
    where the inequality follows from Lemma \ref{lem:Ax_2} 
    for the matrix $(\Id_{n_{k+1}}\otimes F_k^T)\in\RR^{n_k n_{k+1}\times N n_{k+1}}$ with $n_k\geq N$ 
    by Assumption \ref{ass:architecture}.
    Using Lemma \ref{lem:grad} again, one has
    \begin{align*}
	\norm{\Delta_{k+1}}_F
	&= \Fnorm{(\Delta_{k+2}U_{k+2}^T) \circ \sigma_{k+1}'(G_{k+1})} \\
	&\geq \minnorm{\sigma_{k+1}'(G_{k+1})} \Fnorm{\Delta_{k+2}U_{k+2}^T} \\
	&= \minnorm{\sigma_{k+1}'(G_{k+1})} \Fnorm{U_{k+2} \Delta_{k+2}^T}  \\
	&\geq \minnorm{\sigma_{k+1}'(G_{k+1})} \svmin{U_{k+2}}  \Fnorm{\Delta_{k+2}} 
    \end{align*}
    where the last inequality follows from Lemma \ref{lem:AB_F} for the matrices
    $U_{k+2}\in\RR^{n_{k+1}\times n_{k+2}}$ and $\Delta_{k+2}^T$  with $n_{k+1}\geq n_{k+2}$ by Assumption \ref{ass:architecture}.
    By repeating this for $\norm{\Delta_{k+2}}_F,\ldots,\norm{\Delta_{L-1}}_F$, one gets
    \begin{align}\label{eq:lowerbound_2}
	\Fnorm{\Delta_{k+1}} 
	&\geq \Big( \prod_{l=k+1}^{L-1} \minnorm{\sigma_l'(G_l)} \svmin{U_{l+1}}  \Big) \Fnorm{\Delta_L}  \nonumber \\
	&=    \Big( \prod_{l=k+1}^{L-1} \minnorm{\sigma_l'(G_l)} \svmin{U_{l+1}}  \Big) \Fnorm{F_L-Y}
    \end{align}
    From \eqref{eq:lowerbound_1}, \eqref{eq:lowerbound_2},
    one obtains 
    \begin{align*}
	\Fnorm{\nabla_{U_{k+1}}\Phi} 
	\geq  \svmin{F_k} 
	\Big( \prod_{l=k+1}^{L-1} \minnorm{\sigma_l'(G_l)} \svmin{U_{l+1}} \Big) \\
	\Fnorm{F_L-Y} 
    \end{align*}
    which proves the lower bound.
    
    The proof for upper bound is similar. Indeed one has
    \begin{align}\label{eq:upperbound_1}
	\Fnorm{\nabla_{U_{k+1}}\Phi} 
	&= \norm{(\Id_{n_{k+1}}\otimes F_k^T) \vec(\Delta_{k+1})}_2 \nonumber \\
	&\leq \svmax{F_k} \norm{\vec(\Delta_{k+1})}_2  \nonumber \\
	&= \svmax{F_k} \norm{\Delta_{k+1}}_F 
    \end{align}
    where the inequality follows from Lemma \ref{lem:Ax_2}
    Now, one has 
    \begin{align*}
	\norm{\Delta_{k+1}}_F
	&= \norm{(\Delta_{k+2}U_{k+2}^T) \circ \sigma_{k+1}'(G_{k+1})}_F \\
	&\leq \maxnorm{\sigma_{k+1}'(G_{k+1})} \Fnorm{\Delta_{k+2}U_{k+2}^T} \\
	&= \maxnorm{\sigma_{k+1}'(G_{k+1})} \Fnorm{U_{k+2}\Delta_{k+2}^T} \\
	&\leq \maxnorm{\sigma_{k+1}'(G_{k+1})} \svmax{U_{k+2}} \norm{\Delta_{k+2}}_F 
    \end{align*}
    where the last inequality follows from Lemma \ref{lem:AB_F}.
    By repeating this chain of inequalities for $\Fnorm{\Delta_{k+2}},\ldots,\Fnorm{\Delta_{L-1}}$, one  obtains:
    \begin{align}\label{eq:upperbound_2}
	\norm{\Delta_{k+1}}_F 
	&\leq \Big( \prod_{l=k+1}^{L-1} \maxnorm{\sigma_l'(G_l)} \svmax{U_{l+1}} \Big) \norm{\Delta_L}_F \nonumber \\
	&= \Big( \prod_{l=k+1}^{L-1} \maxnorm{\sigma_l'(G_l)} \svmax{U_{l+1}} \Big) \Fnorm{F_L-Y} . 
    \end{align}
    From \eqref{eq:upperbound_1}, \eqref{eq:upperbound_2},
    one obtains that 
    \begin{align*}
	\Fnorm{\nabla_{U_{k+1}}\Phi} 
	\leq \svmax{F_k}
	\Big( \prod_{l=k+1}^{L-1} \maxnorm{\sigma_l'(G_l)} \svmax{U_{l+1}}  \Big) \\
	\Fnorm{F_L-Y} 
    \end{align*}
    which proves the upper bound.

\section{Proof of Theorem \ref{theo:loss_surface_special_case}}
\begin{enumerate}
     \item 
    Since layer $k+1$ is fully connected, 
    it holds at every critical point in $S_k$ that $\nabla_{W_{k+1}}\Phi=0=\nabla_{U_{k+1}}\Phi.$
    This combined with Theorem \ref{theo:loss_surface} yields the result.
    \item 
	One basically needs to show that there exist $(W_l,b_l)_{l=1}^L$ such that it holds: 
	$\Phi\Big( (W_l,b_l)_{l=1}^L \Big)=0, \rank(F_k)=N$ and 
	$U_l=\mathcal{M}_l(W_l)$ has full rank $\forall l\in[k+2,L]$
	Note that the last two conditions are fulfilled by the fact that $(W_l,b_l)_{l=1}^L\in S_k$.
	    
	By Assumption \ref{ass:architecture}, 
	the subnetwork consisting of the first $k$ layers satisfies the condition of Theorem \ref{theo:linear_independence}.
	Thus by applying Theorem \ref{theo:linear_independence} to this subnetwork,
	one obtains that there exist $(W_l,b_l)_{l=1}^k$ such that $\rank(F_k)=N$.
	In the following, we fix these layers and show how to pick $(W_l,b_l)_{l=k+1}^L$ such that $F_L=Y.$
	The main idea now is to make the output of all the training samples of the same class
	become identical at layer $k+1$ and thus they will have the same network output.
	Since there are only $m$ classes, there would be only $m$ distinct outputs for all the training samples at layer $L-1$.
	Thus if one can make these $m$ distinct outputs become linearly independent at layer $L-1$
	then there always exists a weight matrix $W_L$ that realizes the target output $Y$
	as the last layer is just a linear map by assumption.
	Moreover, we will show that, except for max-pooling layers, all the parameters of other layers in the network 
	can be chosen in such a way that all the weight matrices $(U_l)_{l=k+2}^L$ achieve full rank.
	Our proof details are as follows. 
	    
	    \underline{Case 1}: $k=L-1$\\
	    It holds $\rank(F_{L-1})=N.$ 
	    Pick $b_L=0$ and $W_L=F_{L-1}^T(F_{L-1}F_{L-1}^T)^{-1} Y$.
	    Since the output layer is fully connected, it follows from Definition \ref{def:fc} that 
	    $F_L = F_{L-1}W_L + \ones_Nb_L^T = Y .$
	    Since $\rank(F_k)=N$ and the full rank condition on $(W_l)_{l=k+2}^L$ is not active when $k=L-1$,
	    it holds that $(W_l,b_l)_{l=1}^L\in S_k$ which finishes the proof.
	    \\

	    \underline{Case 2}: $k=L-2$\\
	    It holds $\rank(F_{L-2})=N.$
	    Let $A\in\RR^{m\times n_{L-1}}$ be a full row rank matrix such that $A_{ij}\in\range(\sigma_{L-1}).$
	    Note that $n_{L-1}\geq n_L=m$ due to Assumption \ref{ass:architecture}.
	    Let $D\in\RR^{N\times n_{L-1}}$ be a matrix satisfying
	    $D_{i:}=A_{j:}$ whenever $x_i$ belongs to class $j$ for every $i\in[N],j\in[m].$ 
	    By construction, $F_{L-2}$ has full row rank, thus we can pick
	    $b_{L-1} = 0, W_{L-1}=F_{L-2}^T(F_{L-2}F_{L-2}^T)^{-1} \sigma_{L-1}^{-1}(D).$
	    Since layer $L-1$ is fully connected by assumption, it follows from Definition \ref{def:fc} that
	    $F_{L-1}=\sigma_{L-1}(F_{L-2}W_{L-1} + \ones_Nb_{L-1}^T) = D$
	    and thus $(F_{L-1})_{i:}=D_{i:}=A_{j:}$ whenever $x_i$ belongs to class $j$. 
	    
	    So far, our construction of the first $L-1$ layers has led to the fact that all the training samples belonging to 
	    the same class will have identical output at layer $L-1.$
	    Since $A$ has full row rank by construction, we can pick for the last layer
	    $b_L=0, W_L = A^T(AA^T)^{-1} Z$
	    where $Z\in\RR^{m\times m}$ is our class embedding matrix with $\rank(Z)=m.$
	    One can easily check that $AW_L = Z$ and that $F_L = F_{L-1}W_L + \ones_Nb_L^T= F_{L-1}W_L$
	    where the later follows from Definition \ref{def:fc} as the output layer is fully connected.
	    Now one can verify that $F_L=Y.$ 
	    Indeed, whenever $x_i$ belongs to class $j$ one has
	    \begin{align*}
		(F_L)_{i:} =(F_{L-1})_{i:}^T W_L =(A)_{j:}^T W_L  =Z_{j:} =Y_{i:} .
	    \end{align*}
	    Moreover, since $\rank(F_k)=N$ and $\rank(W_L)=\rank(A^T(AA^T)^{-1} Z)=m$, 
	    it holds that $(W_l,b_l)_{l=1}^L \in S_k.$ 
	    Therefore, there exists $(W_l,b_l)_{l=1}^L\in  S_k$ with $\Phi\Big((W_l,b_l)_{l=1}^L\Big)=0.$
	    \\

	    \underline{Case 3}: $k\leq L-3$\\
	    It holds $\rank(F_k)=N.$
	    Let $E\in\RR^{m\times n_{k+1}}$ be any matrix such that $E_{ij}\in\range(\sigma_{k+1})$
	    and $E_{ip}\neq E_{jq}$ for every $1\leq i\neq j\leq N,1\leq p,q\leq n_{k+1}.$
	    Let $D\in\RR^{N\times n_{k+1}}$ satisfies $D_{i:}=E_{j:}$ for every $x_i$ from class $j.$ 
	    Pick $b_{k+1} = 0, W_{k+1}=F_k^T(F_kF_k^T)^{-1} \sigma_{k+1}^{-1}(D).$
	    Note that the matrix is invertible as $F_k$ has been chosen to have full row rank.
	    Since layer $k+1$ is fully connected by our assumption,
	    it follows from Definition \ref{def:fc} that 
	    $F_{k+1}=\sigma_{k+1}(F_k W_{k+1} + \ones_Nb_{k+1}^T) = \sigma_{k+1}(F_k W_{k+1}) = D$ 
	    and thus it holds 
	    \begin{align}\label{eq:FAD}
		(F_{k+1})_{i:} = D_{i:} = E_{j:}
	    \end{align}
	    for every $x_i$ from class $j$. 
	    
	    So far, our construction has led to the fact that all training samples belonging to the same class 
	    have identical output at layer $k+1.$
	    The idea now is to see $E$ as a new training data matrix of a subnetwork 
	    consisting of all layers from $k+1$ till the output layer $L$.
	    In particular, layer $k+1$ can be seen as the input layer of this subnetwork 
	    and similarly, layer $L$ can be seen as the output layer.
	    Moreover, every row of $E\in\RR^{m\times n_{k+1}}$ can be seen as a new training sample to this subnetwork.
	    One can see that this subnet together with the training data matrix $E$
	    satisfy the conditions of Theorem \ref{theo:linear_independence} at the last hidden layer $L-1$.
	    In particular, it holds that
	    \begin{itemize}
		\item The rows of $E\in\RR^{m\times n_{k+1}}$ are componentwise different from each other,
		 and thus the input patches must be also different from each other, 
		 and thus $E$ satisfies Assumption \ref{ass:different_patches}
		\item Every layer from $k+1$ till $L-1$ is convolutional or fully connected due to Assumption \ref{ass:architecture}
		\item The width of layer $L-1$ is larger than the number of samples due to Assumption \ref{ass:architecture}, 
		that is, $n_{L-1}\geq n_L=m$
		\item $(\sigma_{k+2},\ldots,\sigma_{L-1})$ satisfy Assumption \ref{ass:activation} due to Assumption \ref{ass:architecture}
	    \end{itemize}
	    By applying Theorem \ref{theo:linear_independence} to this subnetwork and training data $E$,
	    we obtain that there must exist $(W_l,b_l)_{l=k+2}^{L-1}$ 
	    for which all the weight matrices $(U_l)_{l=k+2}^{L-1}$ have full rank
	    such that the set of corresponding $m$ outputs at layer $L-1$ are linearly independent.
	    In particular, let $A\in\RR^{m\times n_{L-1}}$ be the corresponding outputs of $E$ through this subnetwork
	    then it holds that $\rank(A)=m.$
	    Intuitively, if one feeds $E_{j:}$ as an input at layer $k+1$ then 
	    one would get $A_{j:}$ as an output at layer $L-1$.
	    This combined with \eqref{eq:FAD} leads to the fact that if one now feeds $(F_{k+1})_{i:}=E_{j:}$ 
	    as an input at layer $k+1$ then one would get at layer $L-1$ the output 
	    $(F_{L-1})_{i:} = A_{j:}$ whenever $x_i$ belongs to class $j$.
	    
	    Last, we pick $b_L=0, W_L=A^T(AA^T)^{-1} Z.$ 
	    It follows that $AW_L = Z.$
	    Since the output layer $L$ is fully connected, it holds from Definition \ref{def:fc} that 
	    $F_L=F_{L-1}W_L + \ones_Nb_L^T = F_{L-1}W_L.$
	    
	    One can verify now that $F_L=Y.$
	    Indeed, for every sample $x_i$ from class $j$ it holds that
	    \begin{align*}
		(F_L)_{i:} = (F_{L-1})_{i:}^T W_L=A_{j:}^T W_L=Z_{j:}=Y_{i:}.
	    \end{align*}
	    
	    Overall, we have shown that $\Phi=0$. 
	    In addition, it holds $\rank(F_k)=N$ from the construction of the first $k$ layers.
	    All the matrices $(U_l)_{l=k+2}^{L-1}$ have full rank by the construction of the subnetwork from $k+1$ till $L$.
	    Moreover, $U_L=W_L$ also has full rank since $\rank(W_L)=\rank(A^T(AA^T)^{-1}Z)=m.$ 	    
	    Therefore it holds $(W_l,b_l)_{l=1}^L\in  S_k.$
\end{enumerate}

\fi
\end{document}